\newtheorem{theorem}{Theorem}
\newtheorem{lemma}{Lemma}
\newcommand{\eg}{\textit{e}.\textit{g}.}
\newcommand{\ie}{\textit{i}.\textit{e}.}
\newcommand{\lh}[1]{{\color{black}{#1}}}
\newcommand{\gjy}[1]{{\color{black}{#1}}}
\title{Debiased Orthogonal Boundary-Driven Efficient Noise Mitigation}
\author{
 \textbf{Hao Li\textsuperscript{1}\thanks{These authors contribute equally to this work.}},
 \textbf{Jiayang Gu\textsuperscript{2}$^*$},
 \textbf{Jingkuan Song\textsuperscript{3}\thanks{Correspondence.}},
 \textbf{An Zhang\textsuperscript{4}},
 \textbf{Lianli Gao\textsuperscript{5}}
\\
\\
 \textsuperscript{1}Washington University in St. Louis,
 \textsuperscript{2}University of Warwick, \textsuperscript{3}Tongji University, \\
 \textsuperscript{4}University of Science and Technology of China, \\
 \textsuperscript{5}University of Electronic Science and Technology of China
  \\
   \small\texttt{\{18th.leolee, jiayang.barrygu, jingkuan.song\}{@gmail.com}}
}
\begin{document}

\maketitle
\begin{abstract}
\label{sec:abstract}
Mitigating the detrimental effects of noisy labels on the training process has become increasingly critical, as obtaining entirely clean or human-annotated samples for large-scale pre-training tasks is often impractical. Nonetheless, existing noise mitigation methods often encounter limitations in practical applications due to their task-specific design, model dependency, and significant computational overhead. In this work, we exploit the properties of high-dimensional orthogonality to identify a robust and effective boundary in cone space for separating clean and noisy samples. Building on this, we propose One-Step Anti-noise (OSA), a model-agnostic noisy label mitigation paradigm that employs an estimator model and a scoring function to assess the noise level of input pairs through just one-step inference. We empirically validate the superiority of OSA, demonstrating its enhanced training robustness, improved task transferability, streamlined deployment, and reduced computational overhead across diverse benchmarks, models, and tasks. Our code is released at 
\url{https://github.com/leolee99/OSA}
\end{abstract}
\section{Introduction}
\label{sec:intro}

\gjy{

Increasing the scale of data is a relatively efficient way to improve model performance in many domains~\cite{scalinglaw1}. 
As existing works tend to harvest data from the internet by query matching~\cite{cc3m}, structural attribute matching~\cite{laion}, it inevitably introduces label noise into the training process. 
The incompatibility between the data and the label hinders positive samples to learn correct embeddings, thus wasting the distributional diversity brought by large-scale data and the cost of collecting them.
This poses a substantial challenge for robust model training in various tasks, such as cross-modal matching~\cite{NCR,NPC}, image classification~\cite{coteaching+}, and image retrieval~\cite{retrieval_prism}.
}

\lh{Noise mitigation methods aim to eliminate the detriment effects from noisy samples during model training.}
However, existing noise mitigation approaches encounter several limitations that constrain their practical applicability: 1) \textbf{Task specificity:} Existing methods~\cite{NCR, webFG, retrieval_tisnt} are tailored to specific tasks, limiting their applicability across different tasks. 2) \textbf{Model dependency:} Most noise mitigation techniques~\cite{retrieval_prism, BiCro} are tightly coupled with specific models, requiring extensive modifications for adaptation to different models. 3) \textbf{Computational cost:} Numerous existing methods necessitate dual-model collaborations~\cite{NCR, coteaching+} or multiple training passes~\cite{NCR}, \ie, they require at least two backward passes per training step, effectively doubling the computational expense and substantially increasing the training burden (see Figure.~\ref{fig1:cost}). Motivated by the remarkable generalization capabilities of multimodal pre-trained models such as CLIP~\cite{CLIP}, several studies~\cite{CLIPCleaner, NoisyDetector, NPC, NoisyDetector2} start leveraging these pre-trained models for noise mitigation. However, these approaches still suffer from aforementioned limitations, including task specificity~\cite{CLIPCleaner, NoisyDetector, NoisyDetector2}, model dependency~\cite{NoisyDetector}, and excessive computational demands~\cite{NPC}, making them hard to utilize in practical scenarios.

\begin{figure*}[t]
    \centering
    \small 
    \subfloat[Multiple backwards enhancing cost]{\label{fig1:cost}
        \includegraphics[width=0.4\linewidth,trim=225 225 225 225,clip]{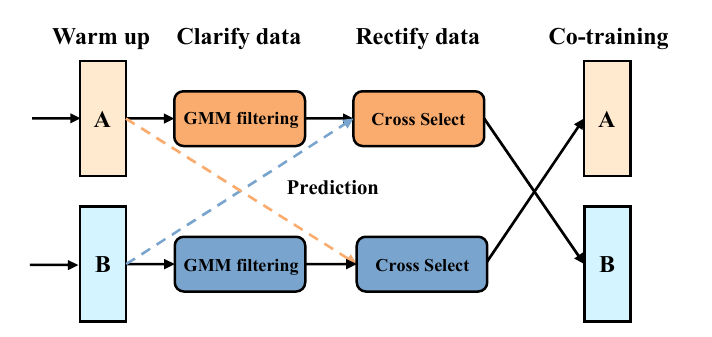}
    }
    \subfloat[Task paradigm unification]{\label{fig1:task}
        \includegraphics[width=0.38\linewidth]{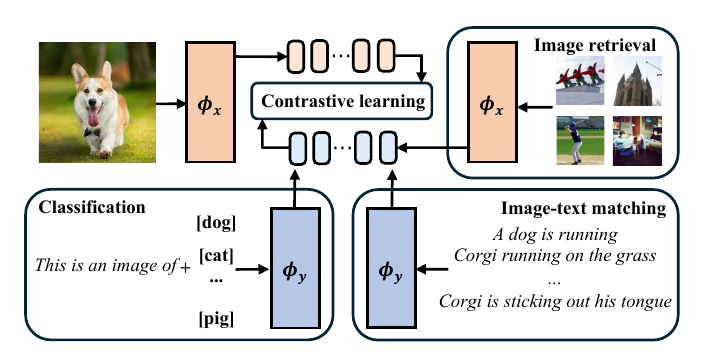}
    }\\
    \subfloat[CLIP on COCO]{\label{fig1:a}
        \includegraphics[width=0.23\linewidth,trim=0 0 0 10,clip]{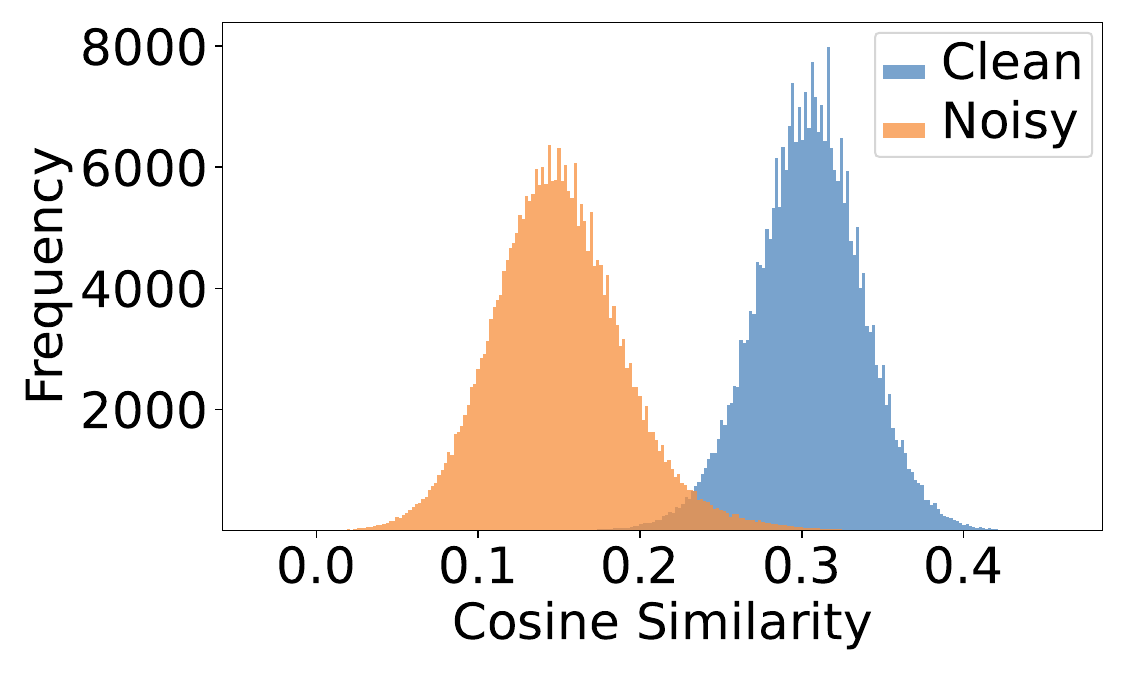}
    }
    \subfloat[ALIGN on COCO]
    {\label{fig1:b}
        \includegraphics[width=0.23\linewidth,trim=0 0 0 10,clip]{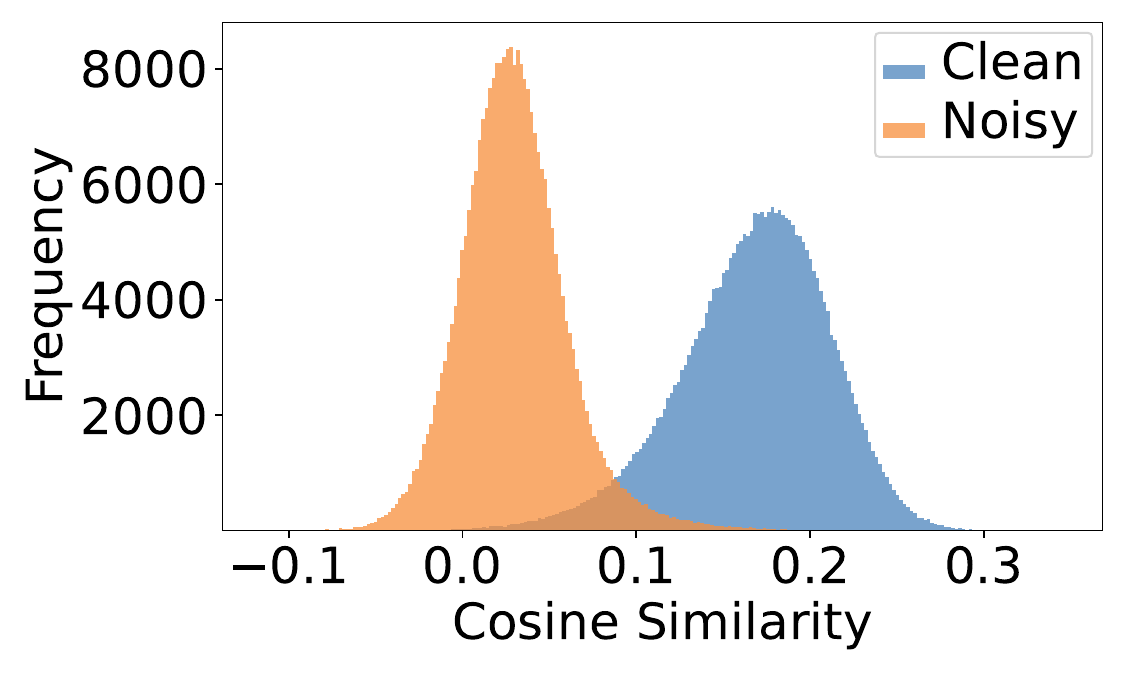}
    }
    \subfloat[CLIP on SDM]{\label{fig1:c}
        \includegraphics[width=0.23\linewidth,trim=0 0 0 10,clip]{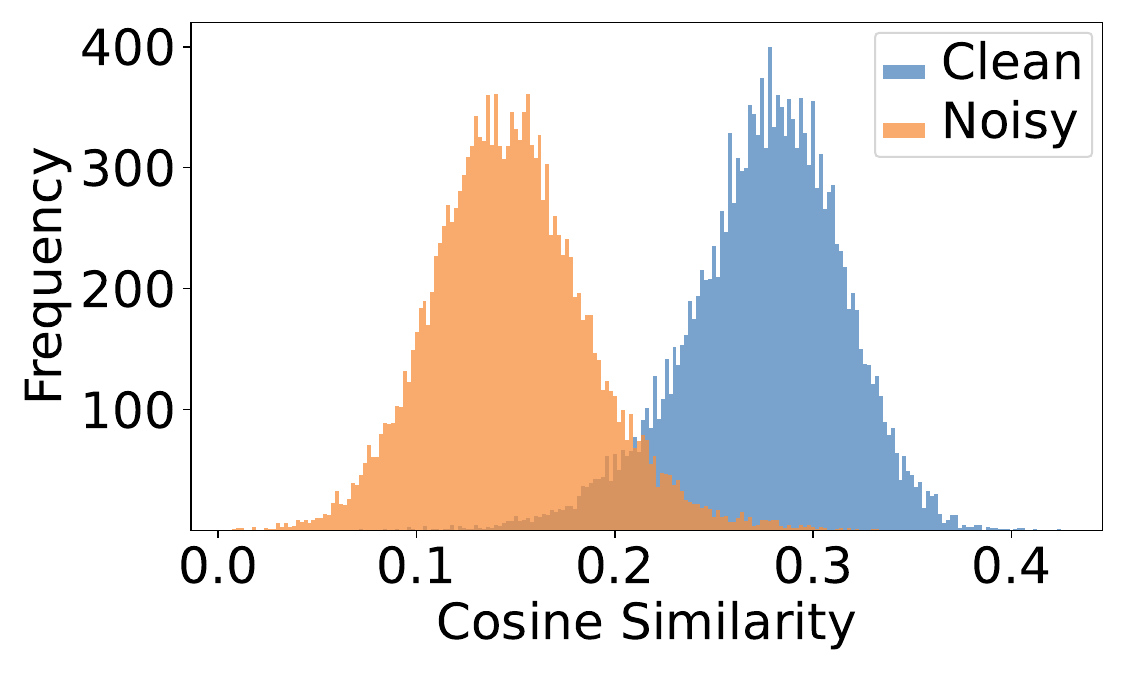}
    }
    \subfloat[ALIGN on SDM]{\label{fig1:d}
        \includegraphics[width=0.23\linewidth,trim=0 0 0 10,clip]{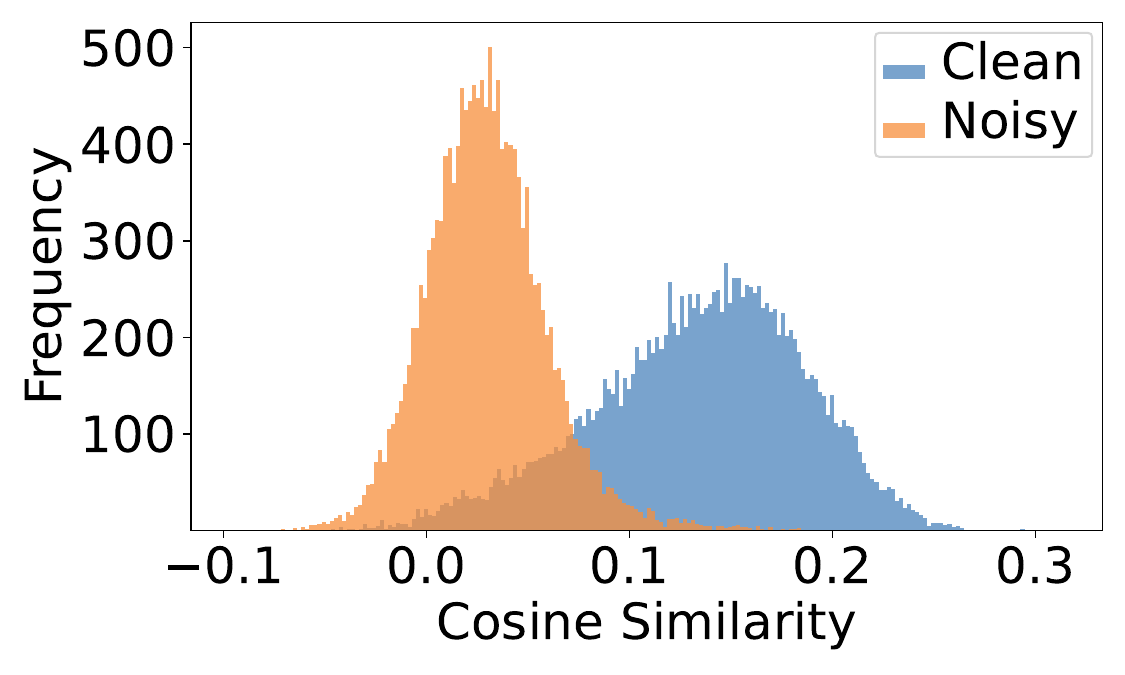}
    }
\caption{\textbf{(a)} The current anti-noise paradigm with multiple backward significantly enhances the training overhead. \textbf{(b)} CLIP unifies the framework of image-text matching and image classification through a shared space. \textbf{(c-f)} Cosine similarity distribution of noise and clean data with 50\% noise.} 
\label{fig:introduction}
\vspace{-5mm}
\end{figure*}

To tackle these challenges, we use an external estimator to assess the noise level of each sample, ensuring the target model-agnostic. 
This estimator reduces the influence of noisy samples by reducing their weights of training loss closer to zero.
We leverage multimodal pre-trained models as the estimator due to their revealed strong semantic capabilities and task transferability.
For instance, CLIP~\cite{CLIP} unifies the paradigms of image-text retrieval and image classification through a shared embedding space (see Figure.~\ref{fig1:task}). It converts category labels into sentences and then calculates the cosine similarity with the image representation to perform image classification. 
In this case, only one additional inference process is required for each sample, significantly reducing the computational overhead compared to performing an extra backward pass. 


Nonetheless, this paradigm introduces a new challenge: how to accurately identify noise based solely on cosine similarity scores inferred by estimators. An ideal solution is to find a decision boundary in cosine space that can separate clean and noisy samples. Existing methods~\citep{NCR, dividemix, CLIPCleaner, NPC} typically attempt to build this boundary within the loss space, an isotropic space with uniform distribution, which creates only a narrow gap between noisy and clean samples. More critically, the coarse handling of overlaps by integrating multi-model predictions often results in an unstable decision boundary. In contrast, the shared embedding space of pre-trained models is a high-dimensional space, \lh{and its corresponding cosine similarity space is an} anisotropic space with the non-uniform distribution. Thus, a consideration is whether the properties of imbalanced anisotropic space can help to identify a more precise and robust decision boundary.


In this work, we delve into the issue of decision boundary selection in anisotropic cosine spaces for pre-trained models to be efficient noise estimators.
Theoretically, a cosine similarity of zero—\ie, an orthogonal boundary—should serve as a natural decision threshold in anisotropic cosine spaces to separate clean and noisy samples. To validate this hypothesis, we empirically analyze the cosine similarity distributions of clean and noisy samples using multimodal pre-trained models CLIP~\cite{CLIP} and ALIGN~\cite{ALIGN} across two datasets: MSCOCO and SDM (Stable Diffusion Model~\cite{StableDiff}), both with a 50\% noise ratio. The SDM dataset, comprising images generated in uncommon artistic styles (see Figure.~\ref{fig:SDM}), is designed to test the robustness of pre-trained models in distinguishing noisy samples from unseen domains.

Surprisingly, as shown in Figure~\ref{fig1:a}-\ref{fig1:d}, the empirically optimal decision boundary deviates significantly from the theoretical orthogonal threshold zero, limiting its usage in practical applications. Despite this deviation, there are also two interesting observations:
\textbf{(1)} the intersection points of clean and noisy distributions remain consistent for the same model across different datasets, suggesting the existence of a stable,
dataset-irrelevant boundary.
\textbf{(2)} even on the SDM dataset—where models encounter unfamiliar domains—the overlap between clean and noisy distributions remains minimal, indicating the boundary’s robustness in distinguishing noisy samples.



\lh{Building on these two observations, we aim to reveal the underlying mechanisms and provide a theoretical guidance for fully exploiting the potential of pre-trained models in noise mitigation. Our key contributions are as follows:}

1. We theoretically figure out the origin of the intersection, attributing it to the shift of orthogonal boundaries induced by the cone effect and elaborate the stability and accuracy of this boundary in separating noisy and clean samples. 


\lh{2. Building on our findings, we develop One-Step Anti-noise (OSA), an efficient and model-agnostic paradigm for noise recognition that requires just one-step inference. 
}

3. We conduct comprehensive experiments across a variety of challenging benchmarks, models, and tasks, demonstrating the effectiveness, generalization capabilities, and efficiency of our findings and introduced methods.

\section{Boundary Principle Analysis}
\label{sec2:analysis}

In Figure~\ref{fig1:a}-\ref{fig1:d}, we observe a natural boundary emerging in the pre-trained model’s ability to distinguish between clean and noisy samples. In this section, we explain the principle of boundary forming from high-dimensional perspectives, and how robust it is in general noise mitigation.

\subsection{Hypothesis: Intersection Boundary is Shifted from Orthogonal Boundary}
\label{sec:hypothesis}
We first analyze the margin between the positive and negative regions defined by the orthogonal decision boundary. We then provide the rationale for our hypothesis that the intersection boundary in Figure~\ref{fig:introduction} can be interpreted as a shifted orthogonal boundary within the conic space.

\paragraph{The orthogonal boundary clearly partitions the positive and negative regions.}
\label{sec:boundarygap}
High-dimensional orthogonality is a well-known phenomenon often referred to as the curse of dimensionality, where the angles between randomly selected vectors tend to concentrate around $90^\circ$, implying cosine similarities close to zero. For example, in a 1024-dimensional space, the probability that two random vectors have a cosine similarity within $[-0.1, 0.1]$ is approximately 99.86\% (see Appendix~\ref{sec:ortho_proof}). This concentration effect induces a natural boundary at zero cosine similarity, which separates vectors with positive and negative similarity values with a wide margin.
\begin{table}[h]
    \renewcommand{\arraystretch}{1}
    \setlength{\tabcolsep}{5mm}
    \centering
    \caption{The mean and variance of cosine similarity between randomly generated pairs.}
    \scalebox{0.8}{
    \begin{tabular}{c|cc}
    \toprule
    Model & Mean & Var\\
    \cmidrule(r){1-3}
    CLIP & 0.215 &  0.024 \\
    ALIGN & 0.087 & 6e-4 \\
    \bottomrule
    \end{tabular}}
    \vspace{-0.2cm}
    \label{table:mean_var}
\end{table}
\paragraph{Cone effect may induce orthogonal boundary shift.}
Recent literature~\cite{mindthegap,querybank,conebert} has demonstrated that the cone effect is a general phenomenon in deep neural networks, where the learned embedding subspace forms a narrow cone and the orthogonal boundary encounters a positive shift. Based on this, a hypothesis is that the intersection boundary in Figure~\ref{fig:introduction} is the shifted orthogonal boundary. To prove this, we simulate the process of selecting random vectors in high-dimensional space and randomly generate thousands of pairs mapped into the shared embedding space. We find that all similarity of these random vector pairs tends to a fixed value, with the low-variance cosine similarity almost lying in the middle of clean and noise distributions (see Table.~\ref{table:mean_var}). 
An interesting phenomenon is that if we compare the mean with the intersection points in Figure~\ref{fig1:a}-\ref{fig1:d}, we find they are almost identical, suggesting that the intersection is highly likely to be a shifted orthogonal boundary in cone space.

\subsection{Theoretical Verification of Intersection Origin}
\label{sec:cone effect}

Here, we theoretically examine whether the intersection boundary originates from a shifted orthogonal boundary. Specifically, we demonstrate that (i) the relative ordering of pairwise cosine similarities remains invariant under projection into the conic space, and (ii) contrastive learning separates clean and noisy samples into opposite half-spaces defined by the orthogonal boundary. From (i) and (ii), it follows that the intersection boundary between the noise-free and noisy distributions corresponds to the shifted orthogonal boundary.

\paragraph{Relative relationship unchanged in propagation process.} 
We study how the boundary shifts from the entire space to the narrow cone in the neural network. The following theorem shows that the cosine similarity will be proportionally scaled to the target narrow cone, while still maintaining a boundary with properties similar to the orthogonal boundary. In other words, vectors with cosine similarity smaller than the orthogonal boundary in the original space remain smaller than the shifted boundary in the narrow cone space, while those larger remain larger.

\begin{theorem}[Proportional shift of boundary]\label{thm:1}
Let $\mathbbm{R}^{d_{in}}$ be the original space before being transmitted in a neural network. Suppose $u,v \in \mathbbm{R}^{d_{in}}$ are any two random vectors with $\cos(u,v) \approx 0$. $u_c,v_c \in \mathbbm{R}^{d_{in}}$ is a pair of clean vectors with $\cos(u_c,v_c) > 0$, while $u_n,v_n \in \mathbbm{R}^{d_{in}}$ is a noisy pair with $\cos(u_n,v_n) < 0$. Given a Neural Network $F(x) = {f_t(f_{t-1}(\dots f_2(f_1(x))))} \in \mathbbm{R}^{d_{out}}$ with $t$ layers. $f_i(x)=\sigma_i(\mathbf{W}_ix+\mathbf{b}_i)$ denotes $i^{th}$ layer, where $\sigma(\cdot)$ indicates activation function. $\mathbf{W}_i \in \mathbbm{R}^{d^i_{out} \times d^i_{in}}$ is a random weight matrix where each element $\mathbf{W}_i^{k,l} \sim \mathcal N(0,1/d_{out}^{i})$ for $k \in \left [ d^i_{out}  \right ] $, $l \in \left [ d^i_{in}  \right ] $, and $\mathbf{b}_i \in \mathbbm{R}^{d^i_{out}}$ is a random bias vector such that $\mathbf {b}^k_i \sim \mathcal N(0,1/d^{i}_{out})$ for $k \in \left [ d^i_{out}  \right ]$. Then, there always be a boundary $\beta$, satisfying:
\begin{equation}
\begin{aligned}
    \cos(F(u_n), F(v_n)) &< \cos(F(u), F(v)) \\
    &\quad\approx \beta < \cos(F(u_c), F(v_c)).
\end{aligned}
\end{equation}

\end{theorem}
Theorem.~\ref{thm:1} shows the pairs' relative relationship in the original entire space remain unchanged after propagating to the narrow cone space of the trained model, and there is always a boundary $\beta$ concentrated on most random vectors. Appendix.~\ref{sec:proof1} provides a detailed statement and proof.

\paragraph{Contrastive learning empowers the separation of clean and noisy samples.}
For an initialized model to learn an embedding space, both clean and noisy samples are treated as orthogonal random vectors since lacking semantic perception ability in the initial space. During contrastive training process, given $N$ sample pairs $\{(x_i,y_i)\}^N_{i=1}$, the embedding space is optimized through the cross-entropy loss: 
\begin{equation}
\mathcal L_{ce} = \frac{1}{N}\sum^N_{i=1}\log\frac{\exp(m_{ii})}{\sum^N_{j=1}\exp(m_{ij})},
\label{eq:celoss}
\end{equation}
where $M \in \mathbbm{R}^{N\times N}$ represents the cosine similarity matrix of $N$ sample pairs during training process. Each element $m_{ij} \in M$ denote the cosine similarity between $x_i$ and $y_j$. The diagonal elements $m_{ii}$ denote the cosine similarities of positive pairs, while the non-diagonal elements $m_{ij}$ represent the cosine similarities of negative pairs.

To minimize $\mathcal L_{ce}$ during training, two subprocesses occur: the diagonal elements of the matrix (\ie, clean pairs) are optimized to the positive side of the orthogonal boundary, while the non-diagonal elements (equivalent to noise pairs) are optimized to the negative side. Consequently, the distributions of these two types of samples are on opposite sides of the orthogonal boundary. Combining with the Theorem.~\ref{thm:1}, we can determine that the interaction boundary between clean and noise in post-trained model is the shifted of orthogonal boundary.



\subsection{Discussion on Applicability} 

\paragraph{How about the boundary robustness even in unfamiliar domains?}
\label{sec:robustness}
\gjy{
To further explore the boundary's generalization ability, especially fine-tuning in unfamiliar domains, we conduct a qualitative analysis on how our boundary can still be effective in this more practical scenario.
}
Given a positive pair from an unseen domain, due to the contrastive learning process during pre-training, it still has a strong likelihood of moving toward the positive side of the boundary, while the negative pair tends toward the negative side. Although the cosine similarity difference might be slight, as we have shown in Section.~\ref{sec:boundarygap}, the boundary constructs a significant gap from the perspective of high-dimensional orthogonality.

\paragraph{How to handle the overlaps through imbalanced probability?}
\label{sec:overlap}
Since orthogonal boundary properties, as cosine similarity decreases and approaches zero from the positive side, the probability of positive samples sharply decreases. Therefore, we can design a scoring function to annotate the cleanliness of samples. This function should satisfy two requirements: for samples with cosine similarity less than or equal to zero, which are almost certainly noise, the function should assign them a weight of zero. For samples with cosine similarity greater than zero, the function gradient should increase rapidly as the cosine similarity moves further from zero. 

\section{Method}
\label{sec:method}

In this section, we present our One-Step Anti-noise (OSA) paradigm with a workflow shown in Figure.~\ref{fig:Framework}. We first define the pair-based noise mitigation tasks in Sec.~\ref{sec:task}. Afterward, we clarify OSA in Sec.~\ref{sec:osd}.

\subsection{Task Definition}
\label{sec:task}
Let $\mathcal D=\{(x_i, y_i, c_i)\}_{i=1}^N$ denote a paired dataset, where $(x_i, y_i)$ represents the $i$-th pair in the dataset, and $c_i$ indicates a noise label for that pair. Specifically, when $c_i = 0$, $(x_i, y_i)$ forms a correct (paired) match, while $c_i = 1$ denotes an incorrect (unpaired) match. The objective of noise mitigation in contrastive learning is to construct a shared embedding space that brings $x_i$ and $y_i$ closer when $c_i=1$. In different tasks, $x_i$ and $y_i$ are distinct data types. For instance, in the image-text retrieval task, $x_i$ and $y_i$ represent images and texts, respectively. In the image classification task, $x_i$ and $y_i$ represent images and categories, respectively. In the image retrieval task, $x_i$ and $y_i$ represent images and relevant images, respectively. The paired sample $(x,y)$ could be encoded into a shared embedding space by corresponding encoders $\phi_x(\cdot)$ and $\phi_y(\cdot)$. Afterward, the cosine similarity $s(x,y)$ is calculated through Eq.~\ref{eq:cos} as semantic relevance of $(x,y)$ to guide the training.
\begin{equation}
s(x,y) =\frac{\phi _ x (x)}{\left\| \phi _ x (x)\right\|} \cdot \frac{\phi _y (y)}{\left\| \phi _y (y)\right\|}\text{.}
\label{eq:cos}
\end{equation}
\subsection{One-step Anti-Noise}
\label{sec:osd}
The workflow of our noise mitigation approach OSA is depicted in Figure.~\ref{fig:Framework}. Initially, we utilize an estimator model to encode the input pair to a shared embedding space and continue to compute the cosine similarity between the paired embeddings. Afterward, the cosine similarity is converted to a cleanliness score $w_i, (0 \leq w_i \leq 1)$ through a scoring function designed based on orthogonal properties (Section.~\ref{sec:overlap}). 
This score quantifies the clean degree of the sample, the smaller $w_i$ is, the noisier the sample. 

During the target model training phase, this cleanliness score is used as a weight, directly multiplied by the loss of the corresponding sample to facilitate selective learning. This noise mitigation process, being solely dependent on the estimator model, is readily adaptable to the training of various target models by simply adding an extra coefficient to the loss function, ensuring the model-agnostic property. Therefore, the key of our noise mitigation approach revolves around the estimator model and noise score assessment.
\begin{figure}[t]
    \centering    \includegraphics[width=0.9\linewidth]{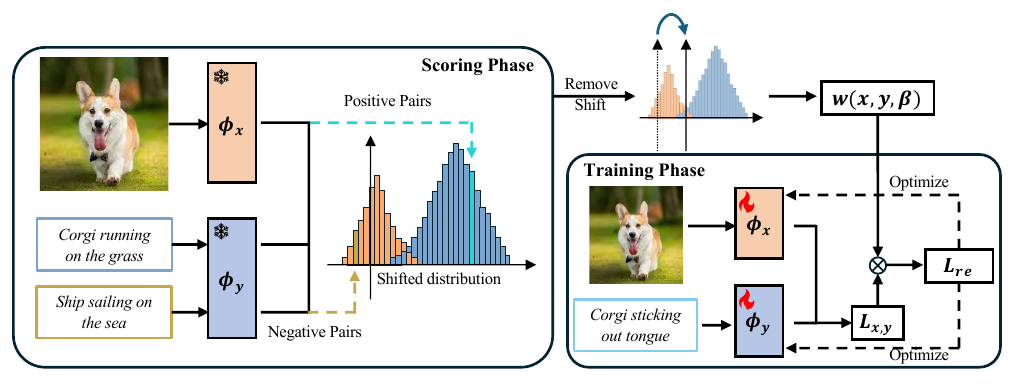}
    \caption{
    \gjy{
    OSA workflow. In the scoring phase, the estimator model calculates the semantic similarity score for input data pairs. Then the score is calibrated by debiasing the effect from the shifted orthogonal boundary. By passing through a scoring function, the in-batch loss is reweighted by the weight $w$.
    }
    }
    \vspace{-0.5cm}
    \label{fig:Framework}
\end{figure}
\vspace{-0.5mm}

\subsubsection{Estimator Model}

\paragraph{Estimator model selection.} In our approach, the Estimator Model must satisfy two critical requirements: 1) effectively mapping input pairs into a unified embedding space and 2) possessing basic semantic understanding capabilities. To meet these requirements, we employ CLIP~\citep{CLIP}, a commonly used multimodal pre-trained models, as our estimator model. It is equipped with a text encoder $\phi_t(\cdot)$ and an image encoder $\phi_v(\cdot)$, enabling it to perform basic zero-shot tasks efficiently.

\paragraph{Domain adaptation (Optional).} While we have performed a qualitative analysis of the zero-shot pre-trained model's robustness on out-of-domain data in Section.~\ref{sec:robustness}, and shown strong robustness for edge cases in Figure.~\ref{fig:introduction}, considering the domain diversity in real-world scenarios, we provide an optional Domain Adaptation (DA) approach to enhance the estimator model's adaptability when encountering edge domains. Following NPC~\citep{NPC}, we first employ a Gaussian Mixture Model (GMM) coupled with strict selection thresholds to ensure the absolute cleanliness of the chosen samples. We afterward implement a warm-up phase with few steps, allowing the estimator model to better understand the semantics of the target domain. Notably, this trick is only optional for our methods. Through multiple experiments, we found that even without domain adaptation, the zero-shot CLIP model performs exceptionally well across various scenarios.

\subsubsection{Noise Score Assessment}

\paragraph{Spatial Debiasing.} The cone effect phenomenon has been demonstrated as a general phenomenon for deep neural networks, typically resulting in a narrow embedding space that causes a shift of space center to a narrow cone center~\citep{mindthegap}. Specifically, when paired randomly generated inputs are mapped into a shared embedding space through model encoders, the resultant vectors exhibit an average cosine similarity that deviates from zero and tends to another fixed angle. To counteract this shift and mitigate its impact on the estimator’s ability to accurately recognize noises through high-dimensional orthogonality, a random sampling method is developed. We begin by constructing $K$ random sample pairs $\mathcal R = \{(x_j,y_j) \mid j=1,2,\dots, K\}$ and processing them through the estimator's encoder to generate a set of vectors. Then the average cosine similarity among these vectors will be calculated as the space shift $\beta$ through:
\begin{equation}
\beta=\frac{\sum^K_{j=1}s(x_j,y_j)}{K}.
\label{eq:relu}
\end{equation}


\paragraph{Scoring Function.}
\label{sec: scoring_main} Following spatial debiasing, we introduce a scoring function $w(\cdot)$ to evaluate the quality of each input pair $(x, y)$. This approach builds upon the orthogonal boundary property detailed in section. ~\ref{sec:overlap}, and utilizes an estimator model trained with contrastive learning. The estimator maps clean pairs to positive similarities and noisy pairs to negative ones, with high-dimensional orthogonality ensuring a robust separation between the two distributions.
The design of $w(\cdot)$ is guided by two principles. First, any pair with a negative cosine similarity $s(x,y)$ is considered noise and assigned a weight of zero. Second, for pairs where $s(x,y)$ surpasses the orthogonal boundary $\beta$, the assigned weight increases sharply as the score moves away from $\beta$. To formalize this, we compute a debiased similarity score $\tilde{s}_{x,y} = s(x,y) - \beta$. The final re-weighting function is then defined as follows (further variants are explored in Appendix~\ref{sec:diff_scoring}):
\begin{equation}
w(x,y,\beta)=\left\{\begin{aligned}
& 0 &, & \tilde{s}_{x,y} \le 0\\
& -(\tilde{s}_{x,y})^2(\tilde{s}_{x,y}-1) &, & \textit{otherwise} \hfill\\
\end{aligned}\right.
\label{eq:scoring}
\end{equation}

\paragraph{Re-weight Training.}
After scoring, the target model can selectively learn from the samples by re-weighting the loss. Noise samples with smaller weights will have a reduced impact on model updates and will be effectively mitigated. For a sample $(x,y)$, let $\mathcal L_{x,y}$ denote its loss, the re-computed loss $\mathcal L_{re}$ is defined as:
\begin{equation}
\mathcal L_{re} = w(x,y,\beta) \times \mathcal L_{x,y}\text{.}
\label{eq:reweight}
\end{equation}

\section{Experiments}
\label{sec:expr}

\subsection{Comparisons with State of The Arts}
\label{sec:sota}

In this section, we present experiments on multiple datasets with label noise, demonstrating the effectiveness of our methods. Firstly, we describe the datasets, metrics, and implementation details. 
Then, we report our results on several downstream tasks.
Lastly, we conduct ablation studies to show how each part of our method contributes and examine how these parts interact. The literature involved in our experiments and richer related work are detailed in Appendix.~\ref{sec:relate}.

\begin{table*}[t]
  \caption{Comparison on noisy MS-COCO.}
  \label{table:coco}
  \renewcommand{\arraystretch}{0.8}
  \setlength{\tabcolsep}{1.5mm}
  \centering
  \scalebox{0.7}{
  \begin{tabular}{c|l|ccc|ccc|ccc|ccc}
    \toprule
    \multirow{3}{*}{Noise ratio} & \multirow{3}{*}{Method} & \multicolumn{6}{c|}{MS-COCO 1K} & \multicolumn{6}{c}{MS-COCO 5K} \\
     & & \multicolumn{3}{c}{i2t} & \multicolumn{3}{c|}{t2i}  & \multicolumn{3}{c}{i2t} & \multicolumn{3}{c}{t2i}\\
     &  &  \multicolumn{1}{c}{R@1} & \multicolumn{1}{c}{R@5} & \multicolumn{1}{c}{R@10} &
                                \multicolumn{1}{c}{R@1} & \multicolumn{1}{c}{R@5} & \multicolumn{1}{c|}{R@10} &  \multicolumn{1}{c}{R@1} & \multicolumn{1}{c}{R@5} & \multicolumn{1}{c}{R@10} &
                                \multicolumn{1}{c}{R@1} & \multicolumn{1}{c}{R@5} & \multicolumn{1}{c}{R@10}   
                                \\
    \cmidrule(r){1-14}
     \multirow{7}{*}{0\%}  & VSE$\infty$    &   82.0    &   \textbf{97.2}      &  98.9     & 69.0   & 92.6     & 96.8  &   62.3    &   87.1      &  \textbf{93.3}     & 48.2   & \textbf{76.7}     & 85.5   \\
    &  PCME++  &   81.6    &   \textbf{97.2}      &  99.0     & \textbf{69.2}   & \textbf{92.8}     & 97.1 &   62.1    &   86.8      &  \textbf{93.3}     & 48.1   & \textbf{76.7}     & \textbf{85.5}\\
     & PAU   &  80.4  & 96.2  &  98.5 & 67.7  & 91.8 & 96.6 &   63.6  & 85.2  &  92.2 & 46.8  & 74.4 & 83.7 \\
    & NPC     & \textbf{82.2}  &   96.5  & \textbf{98.7} & 68.3  & 92.0 & \textbf{98.7} &  65.4   & \textbf{87.3} &  93.1 & 48.5  & 75.4 & 84.4 \\     
    \cmidrule(r){2-14}
    & CLIP     &  80.1   &   95.7  &  98.2 & 67.1  & 91.4 & 96.6 &  62.9   &   84.9  &  91.6 & 46.5  & 73.8 & 82.9 \\
    & \textbf{\qquad +OSA}     &  \textbf{82.2}   &   96.5  &  \textbf{98.7} & 68.8  & 92.1 & 96.7 &  \textbf{65.6}  & 86.8 & 92.9 & \textbf{49.1}  & 76.2 & 84.8 \\
    \cmidrule(r){2-14}   
    & ALIGN     &  84.9   &   97.3  &  \textbf{99.0} & 70.5  & 92.8 & 97.2 &  69.6   &   \textbf{89.9}  &  94.5 & 50.5  & 77.5 & 85.7 \\ 
    & \textbf{\qquad +OSA}     &  \textbf{85.3}   &   \textbf{97.4}  &  \textbf{99.0} & \textbf{71.4}  & \textbf{93.1} & 97.3 &   \textbf{69.8}   &   \textbf{89.9}  &  \textbf{94.8} & \textbf{51.4}  & \textbf{78.2} & \textbf{86.3} \\
    
    \cmidrule(r){1-14}
     \multirow{7}{*}{20\%}  & VSE$\infty$     &  78.4    &   94.3      &  97.0     & 65.5   & 89.3     & 94.1  &  58.6    &   83.4      &  89.9     & 45.0   & 72.9     & 81.7   \\
     & PCME++    &   78.4    &   95.9      &  98.4     & 64.9   & 90.8     & 96.1  &   57.7    &   83.9      &  91.0     & 43.2   & 72.3     & 82.4   \\
    & PAU  &   78.2  & 95.2  &  98.1 & 64.5  & 90.0 & 95.4 &   59.3  & 82.9  &  90.4 & 44.2  & 71.3 & 81.3 \\
    & NPC     &  79.9   &   95.9  &  98.4 & 66.3  & 90.8 & \textbf{98.4} &  61.6   &   85.4  &  91.6 & 46.0  & 73.4 & 82.9 \\   
    \cmidrule(r){2-14}
    &  CLIP     &   76.0   &   94.3  &  97.5 & 63.4  & 89.0 & 94.8 &   55.3   &   79.1  &  86.9 & 41.0  & 68.8 & 79.3 \\
    & \textbf{\qquad +OSA}     &  \textbf{81.6}   &  \textbf{96.2}  &  \textbf{98.5} & \textbf{68.9}  & \textbf{92.0} & 96.6 &  \textbf{65.8}  & \textbf{86.4} & \textbf{92.5} & \textbf{48.7}  & \textbf{76.1} & \textbf{84.5} \\
    \cmidrule(r){2-14}
    & ALIGN     &   79.4   &   95.7  &  98.2 & 66.2  & 90.8 & 96.1 &   60.9  &   84.5  &  91.0 & 46.3  & 73.6 & 82.3 \\ 
    & \textbf{\qquad +OSA}     &  \textbf{85.1}   &  \textbf{97.4}  &  \textbf{99.1} & \textbf{70.9}  & \textbf{93.0} & 97.3 &  \textbf{69.7}  & \textbf{90.0} & \textbf{94.7} & \textbf{50.9}  & \textbf{77.8} & \textbf{86.2} \\    
    
    \cmidrule(r){1-14}
     \multirow{7}{*}{50\%}  & VSE$\infty$   &   44.3    &   76.1      &  86.9     & 34.0   & 69.2     & 84.5  &   22.4    &   48.2      &  61.1     & 15.8   & 38.8     & 52.1   \\
    & PCME++    &   74.8    &   94.3      &  97.7     & 60.4   & 88.7     & 95.0 &   52.5    &   79.6      &  88.4     & 38.6   & 68.0     & 79.0    \\
    & PAU   &  76.4  & 94.1  &  97.6 & 62.3  & 88.5 & 94.6 &  57.3  & 81.5  &  88.8 & 41.9  & 69.4 & 79.6 \\
    & NPC     &  78.2   &   94.4  &  97.7 & 63.1  & 89.0 & \textbf{97.7}  &  59.9   &   82.9  &  89.7 & 43.0  & 70.2 & 80.0\\
    \cmidrule(r){2-14}
    & CLIP     &  73.9   &   93.0  &  97.2 & 60.1  & 87.3 & 94.0  &  54.1   &   78.5  &  86.6 & 39.7  & 67.2 & 77.5\\
    & \textbf{\qquad +OSA}     &   \textbf{81.4}   &  \textbf{96.5} &  \textbf{98.6} & \textbf{68.4}  & \textbf{92.0} & \textbf{96.6} &  \textbf{64.7}  & \textbf{86.8} & \textbf{92.4} & \textbf{48.6}  & \textbf{75.9} & \textbf{84.6} \\
    \cmidrule(r){2-14}
    & ALIGN     &  78.0   &   95.8  &  98.5 & 65.4  & 90.3 & 96.0  &  60.1   &   84.3  &  91.2 & 45.2  & 72.8 & 82.1 \\ 
    & \textbf{\qquad +OSA}     &  \textbf{84.3}   &   \textbf{97.0}  &  \textbf{98.9} & \textbf{70.0}  & \textbf{92.5} & 97.0 &  \textbf{68.5}  & \textbf{89.2} & \textbf{94.2} & \textbf{50.0}  & \textbf{77.0} & \textbf{85.4} \\
    \bottomrule
  \end{tabular}}
\end{table*}

\subsection{Evaluation Setting}
In this section, we briefly introduce the datasets and evaluation metrics used in the experiments. For more dataset and implementation details, please refer to Appendix.~\ref{sec:appendix_details_of_implementaion}.

\paragraph{Datasets.} We evaluate our method on three downstream tasks with noisy labels, including one multimodal task and two visual tasks. For the cross-modal matching task, we perform experiments on the \underline{MSCOCO}~\citep{mscoco} and \underline{Flickr30K}~\citep{flickr30k} datasets. Following NPC~\citep{NPC}, we further carry out evaluations on a real-world noisy dataset \underline{CC120K} and \underline{CC3M}. For image classification tasks, experiments are conducted under three subsets of \underline{WebFG-496}~\citep{webFG}—Aircraft, Bird, and Car. 
For image retrieval tasks, we conduct experiments on the \underline{CARS98N} dataset under PRISM~\citep{retrieval_prism} setting.

\paragraph{Evaluation Metrics.} For the image-text matching task, the recall value of the top-K retrieved results (R@K) is used. For classification tasks, accuracy serves as the evaluation metric. For the image retrieval task, we use Precision@1 and mAP@R for evaluation.

\subsection{Multi-task Evaluation}
\paragraph{Results on MSCOCO.}
To fairly demonstrate the effectiveness of our method, we compare OSA with various robust learning image-text matching approaches using the same ViT-B/32 CLIP as backbone, including VSE$\infty$~\citep{VSE}, PCME++~\citep{PCME++}, PAU~\citep{PAU}, NPC~\citep{NPC}. Besides, we separately employ OSA on both CLIP~\citep{CLIP} and ALIGN~\citep{ALIGN}. The results in Table.~\ref{table:coco} show that OSA outperforms all previous approaches on all metrics with a huge gap. In the more challenging MS-COCO 5K set with 50\% noise ratio, OSA surpasses the SOTA method NPC in the R@1 for both image-to-text (i2t) and text-to-image (t2i) matching by 8.6\% and 7.0\%, respectively. Another phenomenon is that as the noise ratio increases from 0\% to 50\%,  all other methods encounter severe performance drop, with an averaging drop of 5.05\% for NPC across four R@1 metrics. In contrast, OSA exhibits only a slight decrease of 1.275\%, showcasing the accuracy and robustness of OSA in anti-noise tasks.

\gjy{
Furthermore, to prove that OSA can generalize to other image-text matching tasks, we evaluate our framework on different synthetic and web-crawled datasets in Appendix \ref{sec:append_qualititive}, including \textbf{Flickr30K}, \textbf{CC120K}, and \textbf{CC3M}.
}

\begin{table}[h]
    \centering
    \caption{Results of other image-based tasks.}
    \renewcommand{\arraystretch}{1.2}
    \setlength{\tabcolsep}{1.2mm}
    \scalebox{0.7}{
    \begin{tabular}{c|ccc|cc}
    \toprule
    \multirow{3}{*}{\textbf{Method}} & \multicolumn{3}{c|}{\textbf{Image Classification}} & \multicolumn{2}{c}{\textbf{Image Retrieval}} \\
    \cline{2-6}
     & Aircraft & Bird & Car  & \multirow{2}{*}{Prec.} & \multirow{2}{*}{mAP}\\  
     & Acc & Acc & Acc & &\\ \hline
    Baseline  &   65.44 &  62.29 & 75.90  & 71.69 & 18.16   \\
    \textbf{+OSA}    &   \textbf{73.18} & \textbf{70.50} & \textbf{80.19}     & \textbf{78.45} & \textbf{24.99} \\
    \bottomrule
    \end{tabular}
    }
    \vspace{-0.2cm}
    \label{table:cls_and_retreival}
\end{table}

\paragraph{Results on image-based Tasks.}
We validate the transferability of OSA by evaluating it on two single-modality tasks: image classification and image retrieval. The results are presented in Table.~\ref{table:cls_and_retreival}. The baseline method for both tasks leverages contrastive learning. In the image classification task, OSA outperforms the baseline by 7.74\%, 8.21\%, and 4.28\% on the Aircraft, Bird, and Car subsets, respectively. In the image retrieval task, OSA improves performance by 6.76\% in precision and 6.83\% in mAP. These improvements prove the strong task transferability and generality of OSA.

\subsection{Target Model-Agnostic Analysis}
To prove that OSA is an architecture-agnostic paradigm easily adaptable to various models, we evaluate it across different architectures and apply it to other anti-noise models to demonstrate its generalization in noise mitigation.

\paragraph{Architecture-agnostic Analysis.}
The effectiveness of OSA on Vision Transformer (ViT) has been proven in Section.~\ref{sec:sota}. We further explore the generality of OSA on target models with other architectures. Specifically, we deploy OSA above the VSE++~\citep{VSE++} model with two different architecture types: ResNet-152~\citep{resnet} and VGG-19~\citep{vgg19}. These two architectures are highly sensitive to noise~\citep{NCR}. In this experiment, all estimator models employ zero-shot CLIP, and we utilize the original VSE++ as our baseline. The results in Table.~\ref{table:backbone} indicates that a significant performance degradation emerged for the baseline methods in a noisy setting, while a stable performance is achieved after employing OSA. The stable performance of these two noise-vulnerable architectures fully demonstrates that OSA possesses an architecture-agnostic property.

\gjy{
\paragraph{Adaptability to Other Anti-Noise Models.}
To further prove the applicability and transferability of OSA, we apply OSA to the current state-of-the-art model NPC~\citep{NPC} and test several different models as estimators in Appendix \ref{sec:transfer}.
}

\begin{table}[t]
  \caption{The results of the target model with different architectures on noisy MSCOCO 1K.}
  \label{table:backbone}
  \renewcommand{\arraystretch}{0.7}
  \setlength{\tabcolsep}{0.5mm}
  \centering
  \scalebox{0.65}{
  \begin{tabular}{c|l|c|ccc|ccc}
    \toprule
    \multirow{2}{*}{Noise ratio} & \multirow{2}{*}{Method} & \multirow{2}{*}{Architecture} & \multicolumn{3}{c}{i2t} & \multicolumn{3}{c}{t2i} \\
     & & & R@1 & R@5 & R@10 & R@1 & R@5 & R@10 \\
    \cmidrule(r){1-9}
     \multirow{4}{*}{0\%}  & Baseline  & \multirow{2}{*}{ResNet-152} & 58.9  & \textbf{86.9} & \textbf{93.8} & 44.2 & 77.9 & \textbf{88.3}  \\
    & \textbf{\qquad+OSA} &  &  \textbf{58.9}  & 86.2 & 93.7 & \textbf{44.3} & 77.9 & 87.9  \\
    \cmidrule(r){2-9}
    & Baseline  & \multirow{2}{*}{VGG-19} & 49.6 & 79.4 & 89.1 & 38.0 & 72.9 & \textbf{84.7}  \\
    & \textbf{\qquad+OSA} &  & \textbf{50.1} &\textbf{80.0} & \textbf{89.3} & \textbf{38.3} & \textbf{73.0} & 84.6  \\

    \cmidrule(r){1-9}
     \multirow{4}{*}{20\%}& Baseline & \multirow{2}{*}{ResNet-152} &  45.8  & 70.3 & 83.7 & 36.1 & 68.4 & 79.7  \\
    & \textbf{\qquad+OSA} &  &  \textbf{58.1}  &  \textbf{86.1} & \textbf{93.2} & \textbf{43.4} & \textbf{76.8} & \textbf{87.2}  \\
    \cmidrule(r){2-9}
    & Baseline  & \multirow{2}{*}{VGG-19} &  33.2  & 67.1 & 81.5 & 25.9 & 58.0 & 71.4  \\
    & \textbf{\qquad+OSA} &  &  \textbf{49.3}  & \textbf{79.1} & \textbf{88.6} & \textbf{37.2} & \textbf{71.9} & \textbf{83.8}  \\
    
    \cmidrule(r){1-9}
     \multirow{4}{*}{50\%} & Baseline & \multirow{2}{*}{ResNet-152} &  28.4  & 61.2 & 75.2 & 5.2 & 14.0 & 19.5  \\
    & \textbf{\qquad+OSA} &  &  \textbf{55.0}  & \textbf{84.0} & \textbf{92.0} & \textbf{40.7} & \textbf{74.7} & \textbf{85.6}  \\
    \cmidrule(r){2-9}
    & Baseline & \multirow{2}{*}{VGG-19} & 2.5 & 9.8 & 16.2 & 0.1 & 0.5 & 1.0  \\
    & \textbf{\qquad+OSA} &  & \textbf{47.1} & \textbf{77.7} & \textbf{87.6} & \textbf{35.7} & \textbf{70.3} & \textbf{82.8}  \\
    \bottomrule
  \end{tabular}}
  \vspace{-5mm}
\end{table}

\subsection{Estimator Model Analysis}
The estimator model is fundamental to OSA's noise mitigation capabilities. We investigated the impact of various estimator models on a CLIP target model by comparing a baseline (without OSA) against configurations using zero-shot CLIP, zero-shot ALIGN, and a domain-adapted CLIP as estimators.
As detailed in Appendix Table~\ref{table:domain}, the results show that using either CLIP or ALIGN as an estimator significantly enhances the target model's performance, highlighting flexibility in the choice of estimator. Notably, the zero-shot CLIP estimator performed comparably to, and at lower noise ratios, even better than its domain-adapted counterpart. This finding demonstrates the exceptional effectiveness of zero-shot CLIP for noise mitigation and suggests that domain adaptation is not always necessary, which simplifies OSA deployment.


\begin{table}[h]
\centering
    \renewcommand{\arraystretch}{1}
    \setlength{\tabcolsep}{1.5mm}
    \caption{ACC and recall of noise detection.}
    \label{table:acc}
\scalebox{0.75}{
\begin{tabular}{llccccc}
\toprule
\multirow{2}{*}{Estimator Type} & 
\multicolumn{2}{c}{20\% noise} & 
\multicolumn{2}{c}{50\% Noise} \\
\cmidrule(lr){2-3} \cmidrule(lr){4-5}
& Acc. & Recall & Acc. & Recall \\
\midrule
CLIP (w/o DA) & 93.88 & 97.49 & 93.91 & 99.35 \\
CLIP (w DA)  & 97.68 & 97.18 & 98.14 & 99.24 \\
\bottomrule
\end{tabular}
}
\vspace{-0.4cm}
\end{table}

\subsection{Noise Assessment Accuracy}
\paragraph{Noise Detection Accuracy Analysis.}
To figure out how accurate OSA is in recognizing noise, we evaluate the accuracy and recall on CLIP without Domain-Adaptation (w/o DA) and CLIP with Domain-Adaptation (w DA) on noisy MSCOCO. We utilize zero as the threshold to roughly divide pairs into noise and clean sets, respectively. Concretely, we classify scores less than or equal to 0 as noise, and scores greater than 0 as clean. The Accuracy means the proportion of the clean pairs correctly classified into the clean set, while the Recall indicates the noisy pairs correctly classified into the noisy set. The results are presented in Table.~\ref{table:acc} indicates the powerful noise recognition capability of OSA. The remarkable performance on CLIP (w/o DA) fully demonstrates the generality of OSA. 
Another notable phenomenon is that all recall scores converge towards 100, suggesting that OSA can almost entirely eliminate the impact of noise on training.

\paragraph{Noise Re-weighting Accuracy Comparison.} 
\gjy{
To understand why our loss re-weighting method outperforms others, we adopt a rank-based approach to show that our method assigns a lower score than NPC to all noise samples in Appendix \ref{sec:append_noise_rank} with the results shown in Table.~\ref{table:weight_rank}.
}

\subsection{Computational Cost Analysis}
\paragraph{Cost in Pre-training.} To evaluate the practicality of OSA in a real-world pre-training scenario, we estimate the additional computational cost for processing 1 billion data points. Using an NVIDIA RTX 3090 with an inference batch size of 4096, 
processing the MS-COCO dataset consisting of 566,435 pairs takes approximately 153 seconds. At this inference rate, processing 1 billion data points would require approximately 75 hours on a single RTX 3090. This cost is negligible for large-scale pre-training, especially with multiple GPUs for parallel inference.

\paragraph{Time Cost Comparison.} 
To evaluate computational efficiency, we benchmarked the training time of our method, OSA, against the CLIP baseline and the state-of-the-art NPC method. While NPC mitigates noise by estimating each sample's negative impact, its approach necessitates double backward passes. In contrast, our method introduces only a negligible training overhead compared to the baseline. As detailed in Appendix Table~\ref{table:time}, OSA requires only one-tenth of the additional training time needed by NPC, demonstrating its superior efficiency and suitability for large-scale, robust training applications.
\section{Conclusion}
\label{sec:conclusion}
This work investigates noise mitigation in large-scale training by analyzing the orthogonal boundary shifts in pre-trained models. We provide a theoretical framework for leveraging these models and introduce OSA, a novel, model-agnostic anti-noise paradigm. OSA is characterized by its task transferability, model adaptability, and minimal computational overhead. By leveraging the properties of high-dimensional orthogonality, we designed a robust decision boundary to effectively distinguish between noisy and clean samples. Our theoretical analysis and comprehensive experiments validate OSA's efficacy, demonstrating state-of-the-art performance in standard anti-noise settings and particular suitability for large-scale applications. To our knowledge, this is the first study to address noise mitigation in this context and propose a generalizable anti-noise framework leveraging the full potential of pre-trained models.


\section*{Limitations}
\label{sec:limitations}
Limited by the significant computational cost of pre-training, it is difficult for us to evaluate in a real pre-training process. Instead, we simulate large-scale pre-training processes to the greatest extent possible, such as evaluating on the real-world noisy dataset CC120K, which shares similar domains with mainstream pre-training datasets like CC4M and CC12M. Exploring the broad domain adaptability of OSA in real pre-training scenarios will be a valuable direction for future work.

\section*{Ethic Statement}

This paper presents work whose goal is to advance the field of robust learning in noisy correspondences. All aspects of this research comply with ethical considerations and standards of research integrity.
\bibliography{custom}

\clearpage

\appendix
\section*{Appendix}

\section{Details of Implementation}
\label{sec:appendix_details_of_implementaion}
\textbf{Dataset Details.} \underline{MSCOCO} is widely used for noisy cross-modal matching, with each image accompanied by five descriptive captions. 
Following the setting of \citet{NCR}, we utilize 113,287 images for training, 5,000 for validation, and 5,000 for testing.
The \underline{Flickr30K} dataset encompasses 31,783 image-text instances, each image paired with five textual annotations. Adhering to the NCR~\citep{NCR}, we use 29,783 images for training and 1,000 images each for validation and testing. 
Regarding noise splits, following the NCR categorization, we conduct experiments at noise ratios of 0\%, 20\%, 40\%, and 60\%. 
\underline{CC120K} is a real-world multimodal noisy dataset collected by~\citet{NPC} from the Internet, with about 3\%-20\% noise ratio. There are 118,851 image-text pairs for training, 1,000 for validation, and 1,000 for testing.

The Aircraft, Bird, and Car we used in the image classification task are three non-overlapping subsets of the \underline{WebFG-496}~\citep{webFG} dataset.
WebFG-496 consists of 53,339 images, totaling 496 subcategories. This dataset is annotated using a webly supervised approach, which leverages resources from web search engines (\eg, Google Image Search Engine, Bing Image Search Engine) to expand the annotated image dataset.

For the image retrieval task, we conduct experiments on the \underline{CARS98N} dataset under PRISM's setting~\citep{retrieval_prism}. We utilize 9,558 car-related images sourced from text-based searches on Pinterest as the training set, and employ the remaining 98 categories from CARS, unsearched on Pinterest, as a clean test set.
The dataset's noise is inherently real-world, with its creators estimating a noise ratio of approximately 50\%.

\textbf{Implementation Details.}
To demonstrate the effectiveness of the OSA, we incorporate an estimator, built around the core of CLIP, and re-weighting operations based on the Estimator's outcomes into numerous downstream tasks. In the principal task of cross-modal image-text retrieval, we employ CLIP with ViT-B/32 as the baseline and target model by default. All experiments are conducted on a single RTX 3090 GPU using the AdamW optimizer. During both training phases, the model is trained for five epochs with a batch size of 256 and 500 warmup steps.

For the image classification task on the WebFG dataset, we align with the field's prevalent models for a fair comparison by employing the ResNet-50 model enhanced by CLIP for feature extraction and the CLIP image encoder as our estimator. Training and testing are executed on single RTX 3090 GPU, with an input image resolution of $448\times448$. The batch size and initial learning rate are specified as 64 and 1e-5, respectively. In the first phase, the estimator is trained with data modeled by a Gaussian Mixture Model (GMM), which considers the classification and matching losses of all training samples, with the GMM probability threshold of 0.95. The classification task leverages the CLIP protocol, where a fixed prompt (``This is a picture of'') is prepended to category texts. 

For the image retrieval task, we use CLIP ViT-B/32 as the baseline, with a batch size set to 128, an initial learning rate of 5e-6, and the number of epochs set to 10. Following the setup of the PRISM~\citep{retrieval_prism}, we set the parameter for sampling positive examples by the random sampler of the dataloader to 4, and adjust the number of positive examples sampled per epoch to one-fourth of the original parameter according to the increase in batch size. In this task, we also adopt a two-stage training approach. The strictly clean in-domain training data for the first stage is obtained using a GMM model with a probability setting of 0.8.

\section{Related work}
\label{sec:relate}
\subsection{Noise Mitigation in Cross-Modal Matching}
The cross-modal matching task~\citep{SCAN, PVSE, VSRN, DAA, SGRAF} serves as a fundamental component in multimodal learning.
However, the inherent difference in information density between these modalities leads to high annotation costs and inconsistent annotation quality, rendering cross-modal tasks particularly vulnerable to label noise.
Some approaches explicitly identify and correct noisy samples through cross-prediction between concurrently trained dual models~\citep{NCR,BiCro,tripartite}, while others~\citep{NPC,DECL} implicitly estimate the probability of sample noise, reducing its training impact by adjusting the loss function. NCR~\citep{NCR} employs the memorization capacity of its counterpart model for simple clean samples to rectify the output results.
BiCro~\citep{BiCro} utilizes the consistency of similarity score distributions from a Siamese model ensemble on noisy data, alongside anchors modeled on the loss distribution via a Beta-Mixture-Model (BMM), to filter out noisy samples.
NPC~\citep{NPC}, deviating from the dual-model training schemes, introduces a two-stage single-model training approach to reduce the overhead by requiring only one model, while still needing two backward passes. Specifically, the first stage estimates the impact of potentially noisy samples on model performance by constructing a high-quality clean sample bank; the second stage then utilizes these estimates to reweight the loss function.
However, current methods for distinguishing clean from noisy samples rely on numerous hyperparameters that are closely linked to dataset size and model capacity. This dependency not only limits their adaptability to various downstream tasks but also makes them challenging to deploy in real-world applications.

\subsection{Noise Mitigation in Image Classification}

Image classification is vulnerable to training data noise, due to varied noise types and strong model memorization. 
Noise in datasets manifests in two primary forms: synthetic alterations and those arising from real-world scenarios. The former typically involves shuffling the labels of a subset of the data or retaining the labels while introducing corresponding category images from external datasets. The latter entails substituting images for a random selection of data points with those sourced from image search engines.
Existing approaches are categorized based on their operational focus: loss correction~\citep{loss_correction1, loss_correction2, loss_correction3_transmatrix,loss_correction4_transmatrix,loss_correction5_transmatrix,loss_correction6_transmatrix,robust_loss1,robust_loss2,robust_loss3,robust_loss4} and sample selection~\citep{pnp, pls, Jo-SRC, dividemix,dsos}. Loss correction methods typically incorporate a regularization term into the loss function, implicitly reweighting clean and noisy samples within the loss. Sample selection strategies, in contrast, explicitly differentiate between clean and noisy samples, applying distinct processing to each category during loss computation.
Representative for the loss correction category, \citep{loss_correction2} aims to generalize ordinary Cross-Entropy loss and MAE loss by setting the loss threshold to iid and ood noisy samples.
DivideMix~\citep{dividemix}  concurrently trains two networks, each utilizing the data partitioning from the other network to distinguish between clean and noisy samples based on loss values, thereby mitigating the influence of confirmation bias inherent within each network.
PNP~\citep{pnp} framework employs a unified predictive network to estimate the in-distribution (iid), out-of-distribution (ood), and clean probabilities for a given sample. Co-training trained on a sample that has a lower loss, and with the different predictions by its siamese network.


%

\subsection{Noise Mitigation in Image Retrieval}
Although image retrieval tasks focus on pairwise relationships, the noise predominantly originates from image categorization errors. Analogous to image classification tasks, this can be bifurcated into in-domain~\citep{retrieval_first} and open-set noise~\citep{retrieval_prism}. In terms of task configuration, noise retrieval typically operates at the category level, treating images within the same category as positive instances.
PRISM~\citep{retrieval_prism} tries to find noisy image samples by finding the outliers score in the whole similarity matrix from the same category. The generalization ability of the image feature is ensured by a broader query bank restored multi-view of it.
TITAN~\citep{retrieval_titan} utilizes prototypes to be representative of the anchor of the clean and noisy samples and then generates synthetic samples by a combination of prototypes for substitution of noisy samples.
T-SINT~\citep{retrieval_tisnt} utilizes more negative samples by the interaction between noisy samples and negative samples that belong to another category.

\section{Proofs}
\subsection{Proof of High-dimensional Orthogonality}
\label{sec:ortho_proof}
Suppose $u, v \in \mathbbm{R}^d$ are any two random vectors. The cosine similarity $\cos(u,v) \sim \mathcal{N}(0, d^{-1})$. The probability that $\cos(u,v)$ is within a specific range $\left [-a,a  \right ] $ is denoted as:
\begin{equation}
P(-a \leq \cos(u,v) \leq a) = \Phi\left(\frac{a}{\varsigma}\right) - \Phi\left(\frac{-a}{\varsigma}\right),
\label{eq:prob_com}
\end{equation}
where $\Phi$ represents the CDF of the standard normal distribution, and $\varsigma = \frac{1}{\sqrt{d}}$ is the standard deviation of the cosine similarity.
When $d=1024$ and $a = 0.1$, there are
\begin{equation}
\varsigma = \frac{1}{\sqrt{1024}}=\frac{1}{32},
\label{eq:deviation}
\end{equation}
and
\begin{multline}
P(-0.1 \leq \cos(u,v) \leq 0.1) = \\
\Phi\left(\frac{0.1}{1/32}\right) - \Phi\left(\frac{-0.1}{1/32}\right) \approx 0.9986.
\label{eq:results_multline}
\end{multline}


\subsection{Proof of Theorem 2.1}
\label{sec:proof1}
In the Section.~\ref{sec:cone effect}, we propose that Theorem 1 about the relative relationship of pairs in the original entire space, will not change after transmitting to the narrow cone space of the trained model, and there is always a boundary $\beta$ concentrated on most random vectors. 

To prove this Theorem, we first introduce a useful lemma of monotonicity of cosine similarity proposed by \citet{mindthegap}, indicating that the cosine similarity between two vectors increases with a high probability after one feedforward computation consisting of a linear transformation and ReLU computation.

\begin{lemma} 
\label{sec:lemma1}
Suppose $u,v \in \mathbbm{R}^{d_{in}}$ are any two fixed vectors such that $\left \| u \right \| =  r \left \| v \right \| $ for some $r>0$, $\mathbf{W} \in \mathbbm{R}^{d_{out} \times d_{in}}$ is a random weight matrix where each element $\mathbf{W}_{k,l} \sim \mathcal N(0,d_{out}^{-1})$ for $k \in \left [ d_{out}  \right ] $, $l \in \left [ d_{in}  \right ] $, and $\mathbf{b} \in \mathbbm{R}^{d_{out}}$ is a random bias vector such that $\mathbf{b}_k \sim \mathcal N(0,d^{-1}_{out})$ for $k \in \left [ d_{out}  \right ]$. If $\cos(u,v) <(\frac{1}{2}(r+\frac{1}{r}))^{-1}$, then the following holds with probability at least $1-O(1/d_{out})$.

\begin{equation}
\cos(\sigma(\mathbf{W}u+\mathbf{b}),\sigma(\mathbf{W}v+\mathbf{b})) > \cos(u,v).
\label{eq:lemma1}
\end{equation}
\end{lemma}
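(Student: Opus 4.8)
The plan is to prove the lemma by the standard random-feature route: reduce the post-activation cosine similarity to three averages over the $d_{out}$ neurons, show each concentrates on its Gaussian expectation, and then reduce the claim to two deterministic inequalities, one for the affine map and one for ReLU. First I would write $\mathbf{W}_k$ for the $k$-th row of $\mathbf{W}$ and set $a_k = \langle \mathbf{W}_k, u\rangle + \mathbf{b}_k$, $c_k = \langle \mathbf{W}_k, v\rangle + \mathbf{b}_k$. Because all entries are centered Gaussian, the pairs $(a_k,c_k)$ are i.i.d.\ across $k$ and jointly centered Gaussian, with $\mathbb{E}[a_k^2] = (\|u\|^2+1)/d_{out}$, $\mathbb{E}[c_k^2] = (\|v\|^2+1)/d_{out}$, and $\mathbb{E}[a_k c_k] = (\langle u,v\rangle + 1)/d_{out}$; the only quantity that survives to the limit is their correlation $\rho = (\langle u,v\rangle + 1)/\sqrt{(\|u\|^2+1)(\|v\|^2+1)}$. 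Since ReLU is $1$-homogeneous, $\cos(\sigma(\mathbf{W}u+\mathbf{b}),\sigma(\mathbf{W}v+\mathbf{b}))$ is the ratio of $\tfrac1{d_{out}}\sum_k \sigma(a_k)\sigma(c_k)$ to $\sqrt{\tfrac1{d_{out}}\sum_k \sigma(a_k)^2}\,\sqrt{\tfrac1{d_{out}}\sum_k \sigma(c_k)^2}$, so the target is a smooth function of three empirical means.

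The concentration step is routine: the summands $\sigma(a_k)^2$, $\sigma(c_k)^2$, $\sigma(a_k)\sigma(c_k)$ are sub-exponential with variance $O(d_{out}^{-2})$, so each average deviates from its mean by $O(d_{out}^{-1/2})$ outside an event of probability $O(1/d_{out})$ (Chebyshev already gives the stated order), and a union bound over the three averages preserves the $O(1/d_{out})$ failure probability. Evaluating the means with the rectified-bivariate-Gaussian (arc-cosine kernel) identities $\mathbb{E}[\sigma(a_1)^2] = \tfrac12\mathbb{E}[a_1^2]$ and $\mathbb{E}[\sigma(a_1)\sigma(c_1)] = \tfrac{\mathrm{SD}(a_1)\,\mathrm{SD}(c_1)}{2\pi}\big(\sqrt{1-\rho^2} + \rho(\pi - \arccos\rho)\big)$, the individual scales cancel and the limiting post-activation cosine becomes the normalized arc-cosine map $g(\rho) = \tfrac1\pi\big(\sqrt{1-\rho^2} + \rho(\pi-\arccos\rho)\big)$, which is strictly increasing and satisfies $g(\rho) > \rho$ for all $\rho \in [-1,1)$.

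It remains to chain two deterministic facts, and this is where the main obstacle and the threshold $(\tfrac12(r+\tfrac1r))^{-1}$ appear. The ReLU half is the clean inequality $g(\rho) > \rho$ just noted. The affine half is to show the bias-carrying linear map already increases correlation, i.e.\ $\rho > \cos(u,v)$. Writing $c = \cos(u,v)$ and using $\|u\| = r\|v\|$, clearing denominators reduces $\rho > c$ (for $c>0$) to $\|v\|^2 c\,[\,2r - c(r^2+1)\,] > c^2 - 1$; since the right side is $-\sin^2\angle(u,v) \le 0$, the inequality holds precisely when $2r - c(r^2+1) > 0$, that is $c < 2r/(r^2+1) = (\tfrac12(r+\tfrac1r))^{-1}$, while the $c \le 0$ case is immediate from the factorization $\rho - c = \big(\langle u,v\rangle(D_2-D_1) + D_2\big)/(D_1 D_2)$ with $D_1 = \sqrt{(\|u\|^2+1)(\|v\|^2+1)} > D_2 = \|u\|\|v\|$. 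I expect this algebraic reduction to be the delicate part, since it is exactly what pins the hypothesis to the geometry of the affine shift. Combining the two halves gives $g(\rho) > \rho > \cos(u,v)$ with a margin independent of $d_{out}$, so the $O(d_{out}^{-1/2})$ concentration error cannot reverse the strict inequality, and the lemma follows with probability $1 - O(1/d_{out})$.
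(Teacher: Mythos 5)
The first thing to note is that the paper contains no proof of this lemma at all: it is imported from \citep{mindthegap} (``a useful lemma of monotonicity of cosine similarity proposed by...'') and used as a black box inside the proof of Theorem~\ref{thm:1}. So there is no in-paper argument to compare against, and your proposal has to be judged on its own merits. On those merits it is essentially correct, and it supplies exactly what the paper leaves uncited-but-unproved. Your reduction to i.i.d.\ centered bivariate Gaussian pairs $(a_k,c_k)$ with correlation $\rho=(\langle u,v\rangle+1)/\sqrt{(\|u\|^2+1)(\|v\|^2+1)}$ is right (the shared bias is what produces the $+1$ terms, and the common $1/d_{out}$ scaling cancels, so $\rho$ is dimension-free); the arc-cosine identities $\mathbb{E}[\sigma(a)^2]=\tfrac12\mathbb{E}[a^2]$ and the rectified cross-moment formula do give the limiting cosine $g(\rho)=\tfrac1\pi\bigl(\sqrt{1-\rho^2}+\rho(\pi-\arccos\rho)\bigr)$; the ReLU half follows since $h(\rho)=\sqrt{1-\rho^2}-\rho\arccos\rho$ has $h'(\rho)=-\arccos\rho<0$ and $h(1)=0$, so $g(\rho)>\rho$ on $[-1,1)$; and your algebra for the affine half correctly locates the threshold, since for $c=\cos(u,v)\in(0,1)$ the inequality $\rho>c$ reduces to $c\|v\|^2\,[\,2r-c(r^2+1)\,]>c^2-1$ with nonpositive right side, which is exactly where $c<2r/(r^2+1)=(\tfrac12(r+\tfrac1r))^{-1}$ enters. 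The hypothesis also rules out the degenerate case $\rho=1$ (that would force $u=v$, hence $r=1$ and $c=1$, violating $c<1$), so the strict inequality $g(\rho)>\rho$ genuinely applies.

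Two statements should be tightened, though neither breaks the proof. First, Chebyshev does not give deviation $O(d_{out}^{-1/2})$ \emph{and} failure probability $O(1/d_{out})$ simultaneously; what it gives is that for each fixed $\epsilon>0$ the empirical mean deviates by more than $\epsilon$ with probability $O_{\epsilon}(1/d_{out})$. That weaker statement is all you need, because the margin $g(\rho)-\cos(u,v)$ is a constant determined by the fixed vectors $u,v$ and independent of $d_{out}$: take $\epsilon$ to be a third of that margin, union-bound over the three empirical means, and absorb the exponentially small probability that a denominator vanishes (all $a_k\le 0$ or all $c_k\le 0$) into the failure event. Second, ``holds precisely when $2r-c(r^2+1)>0$'' overclaims: positivity of the bracket is \emph{sufficient} (positive left side versus negative right side) but not necessary, since when the bracket is negative the inequality $c\|v\|^2[\,2r-c(r^2+1)\,]>c^2-1$ can still hold for small $\|v\|$. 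Only sufficiency is used, so this is a wording fix, not a gap.
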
 

\begin{proof}[Proof of Theorem.~\ref{thm:1}] 
Let $\mathbbm{R}^{d_{in}}$ be the original space before being transmitted in a neural network. Suppose $u,v \in \mathbbm{R}^{d_{in}}$ are any two random vectors with $\cos(u,v) \approx 0$. $u_c,v_c \in \mathbbm{R}^{d_{in}}$ is a pair of clean vectors with $\cos(u_c,v_c) > 0$, while $u_n,v_n \in \mathbbm{R}^{d_{in}}$ is a noisy pair with $\cos(u_n,v_n) < 0$. Given a Neural Network $F(x) = {f_t(f_{t-1}(\dots f_2(f_1(x))))} \in \mathbbm{R}^{d_{out}}$ with $t$ layers. $f_i(x)=\sigma_i(\mathbf{W}_ix+\mathbf{b}_i)$ denotes $i^{th}$ layer, where $\sigma(\cdot)$ indicates activation function. $\mathbf{W}_i \in \mathbbm{R}^{d^i_{out} \times d^i_{in}}$ is a random weight matrix where each element $\mathbf{W}_i^{k,l} \sim \mathcal N(0,1/d_{out}^{i})$ for $k \in \left [ d^i_{out}  \right ] $, $l \in \left [ d^i_{in}  \right ] $, and $\mathbf{b}_i \in \mathbbm{R}^{d^i_{out}}$ is a random bias vector such that $\mathbf {b}^k_i \sim \mathcal N(0,1/d^{i}_{out})$ for $k \in \left [ d^i_{out}  \right ]$. We would like to prove that there is always a boundary $\beta$, satisfying:
\begin{multline}
\cos(F(u_n),F(v_n)) < \cos(F(u),F(v)) \\
\approx \beta < \cos(F(u_c),F(v_c)),
\label{eq:theo1_rep}
\end{multline}
which is equivalent to proving,
\begin{multline}
\cos(f_i(u_n),f_i(v_n)) < \cos(f_i(u),f_i(v)) \\
\approx \beta_i < \cos(f_i(u_c),f_i(v_c)),
\label{eq:theo1_rep_f}
\end{multline}
where $\beta_i$ is the boundary of $i^{th}$ layer.

We first consider the cosine similarity between $u$ and $v$ as:
\begin{equation}
\cos(u, v) = \frac{u \cdot v}{\|u\|\|v\|}.
\label{eq:cos_tv}
\end{equation}
After a linear transformation of $i^{th}$ layer, the cosine similarity of $\cos(\mathbf{W}_iu+\mathbf{b}_i, \mathbf{W}_iv+\mathbf{b}_i)$ denotes:
\begin{multline}
\cos(\mathbf{W}_iu+\mathbf{b}_i,\mathbf{W}_iv+\mathbf{b}_i) = \\
\frac{(\mathbf{W}_iu+\mathbf{b}_i) \cdot (\mathbf{W}_iv+\mathbf{b}_i)}{\|\mathbf{W}_iu+\mathbf{b}_i\|\|\mathbf{W}_iv+\mathbf{b}_i\|}.
\label{eq:cos_trans_tv}
\end{multline}
Since $\mathbf{b}_i$ has a mean of zero and is independent from $\mathbf{W}_i u$ and $\mathbf{W}_i v$, the expectation of $\mathbf{b}_i$ and $(\mathbf{W}_i u+\mathbf{b}_i)\cdot \mathbf{W}_i v+\mathbf{b}_i)$ can be signified as:
\begin{equation}
\mathbb{E}\left [ \mathbf{b}_i \right ]  = 0,
\label{eq:ex_b}
\end{equation}
\begin{multline}
\mathbb{E}\left [(\mathbf{W}_i u+\mathbf{b}_i)\cdot (\mathbf{W}_i v+\mathbf{b}_i)\right ] \\
= \mathbb{E}\left [(\mathbf{W}_i u\cdot \mathbf{W}_i v)\right ]\\
=\sum^n_{i=1}\sum^n_{i=1}\frac{1}{d^i_{out}}u_kv_k=\frac{1}{d^i_{out}}(u\cdot v).
\label{eq:ex_dot}
\end{multline}

Additionally, we have
\begin{equation}
\|\mathbf{W}_iu+\mathbf{b}_i\|^2=\mathbf{W}_iu \cdot \mathbf{W}_iu + 2\mathbf{W}_iu \cdot \mathbf{b}_i + \mathbf{b}_i \cdot \mathbf{b}_i.
\label{eq:mode}
\end{equation}

Due to $\mathbbm {b}^k \sim \mathcal N(0,1/d^{i}_{out})$, as $d^{i}_{out}$ increases, the term of $2\mathbf{W}_iu \cdot \mathbf{b}_i$ and $\mathbf{b}_i \cdot \mathbf{b}_i$ become negligible, which implies
\begin{equation}
\|\mathbf{W}_iu+\mathbf{b}_i\|^2 \approx \mathbf{W}_iu \cdot \mathbf{W}_iu=\sum_{i=1}^n(\mathbf{W}_iu)^2.
\label{eq:mode2}
\end{equation}
Therefore, the expectation of $\|\mathbf{W}_iu+\mathbf{b}_i\|^2$ is approximate to
\begin{equation}
\mathbb{E}\left [ \|\mathbf{W}_iu\|^2 \right ] = \sum_{k=1}^n u_k^2 \frac{1}{d^i_{out}} = \frac{\|u\|}{d^i_{out}},
\label{eq:mode3_1}
\end{equation}
and 

\begin{equation}
\begin{aligned}
&\cos(\mathbf{W}_iu + \mathbf{b}_i, \mathbf{W}_iv + \mathbf{b}_i) \\
&\approx \frac{\mathbb{E}[\mathbf{W}_iu \cdot \mathbf{W}_iv]}{\sqrt{\mathbb{E}[\|\mathbf{W}_iu + \mathbf{b}_i\|^2] \mathbb{E}[\|\mathbf{W}_iv + \mathbf{b}_i\|^2]}} \\
&= \frac{\frac{1}{d^i_{out}} (u \cdot v)}{\sqrt{\frac{1}{d^i_{out}} \|u\|^2 \cdot \frac{1}{d^i_{out}} \|v\|^2}} \\
&= \cos(u, v).
\end{aligned}
\label{eq:equal}
\end{equation}

Based on Eq.~\ref{eq:equal}, with $\cos(u_n, v_n) < \cos(u, v) \approx 0 < \cos(u_c, v_c)$, there are
\begin{equation}
\begin{aligned}
&\cos(\mathbf{W}_i u_n+\mathbf{b}_i, \mathbf{W}_i v_n+\mathbf{b}_i) \\ 
&< \cos(\mathbf{W}_i u + \mathbf{b}_i, \mathbf{W}_i v + \mathbf{b}_i) \\
&< \cos(\mathbf{W}_i u_c + \mathbf{b}_i, \mathbf{W}_i v_c + \mathbf{b}_i).
\end{aligned}
\label{eq:mode3_2}
\end{equation}
Since the activation function $\sigma$ is a monotonically increasing function, it follows 

\begin{equation}
\begin{aligned}
&\cos(f_i(u_n),f_i(v_n)) < \cos(f_i(u),f_i(v)) \\
&< \cos(f_i(u_c),f_i(v_c)).
\end{aligned}
\label{eq:mode3_3}
\end{equation}

Due to Lemma.~\ref{sec:lemma1}, $\cos(f_i(u),f_i(v))$ will be increase with the transmitting layers, and $\cos(f_i(u),f_i(v))$ will always be a $\beta_i > 0$, to satisfy:
\begin{equation}
\begin{aligned}
&\cos(f_i(u_n),f_i(v_n)) < \cos(f_i(u),f_i(v)) \approx \beta_i \\
&< \cos(f_i(u_c),f_i(v_c)).
\label{eq:theo1_rep2}
\end{aligned}
\end{equation}

After transmitting each layer, Eq.~\ref{eq:theo1_rep2} is always satisfied. When transmitting a neural network with $t$ layers, we have
\begin{equation}
\begin{aligned}
&\cos(F(u_n),F(v_n)) < \cos(F(u),F(v)) \approx \beta \\
&< \cos(F(u_c),F(v_c)).
\label{eq:theo1_rep_last}
\end{aligned}
\end{equation}
\end{proof}

\begin{figure*}[ht!]
    \centering
    \small 
    \subfloat[parameters]{\label{gaussian:a}
        \includegraphics[width=0.24\linewidth,trim=0 0 0 10,clip]{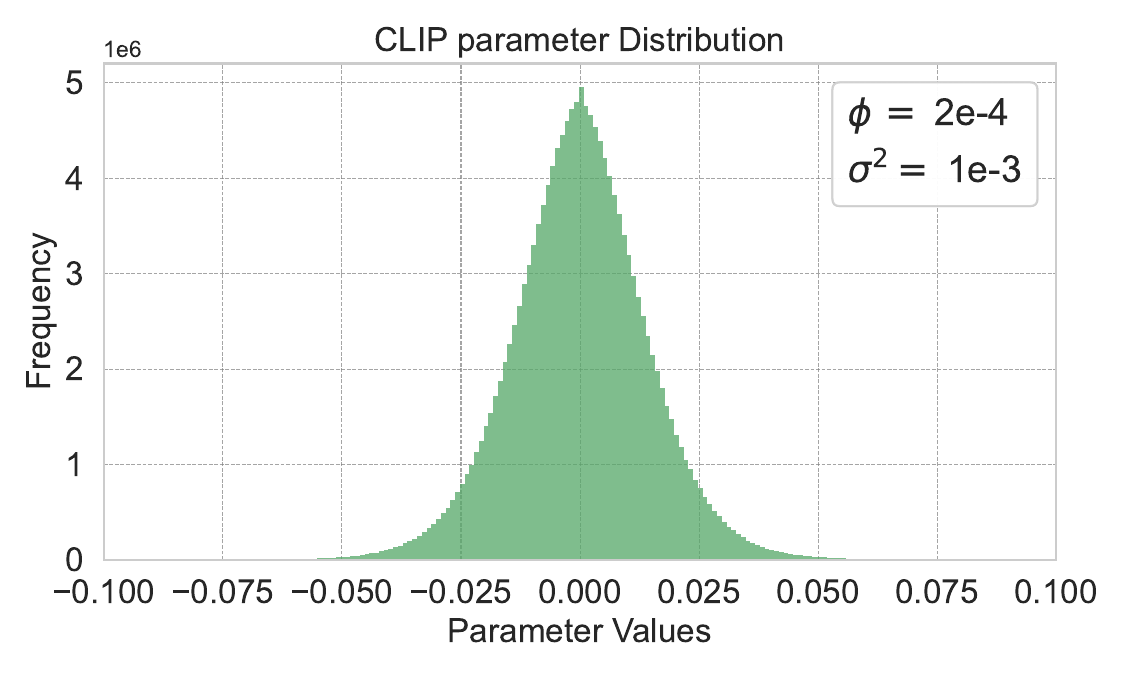}
    }
    \subfloat[128-th dim of image]
    {\label{gaussian:b}
        \includegraphics[width=0.24\linewidth,trim=0 0 0 10,clip]{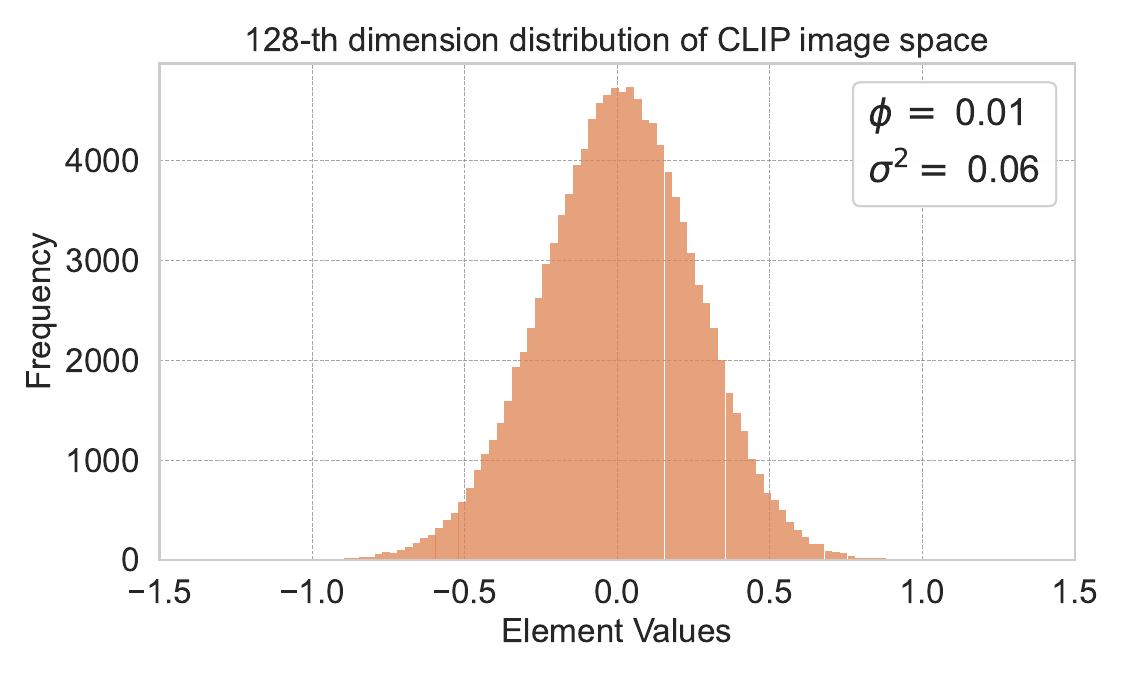}
    }
    \subfloat[256-th dim of image]{\label{gaussian:c}
        \includegraphics[width=0.24\linewidth,trim=0 0 0 10,clip]{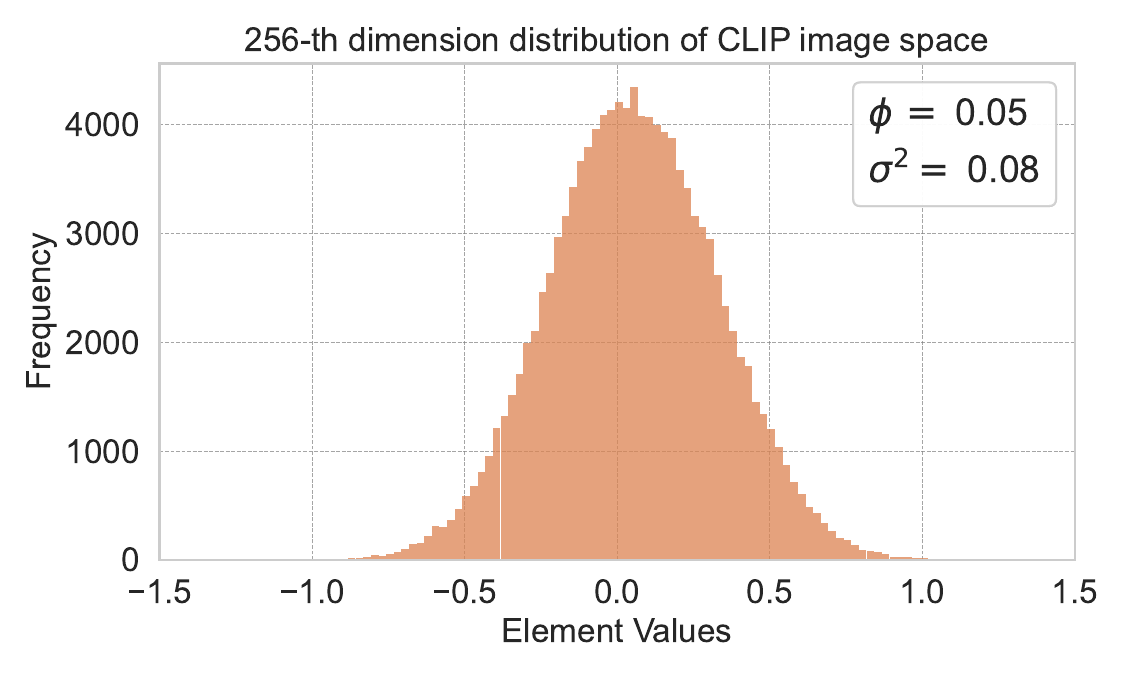}
    }
    \subfloat[512-th dim of image]{\label{gaussian:d}
        \includegraphics[width=0.24\linewidth,trim=0 0 0 10,clip]{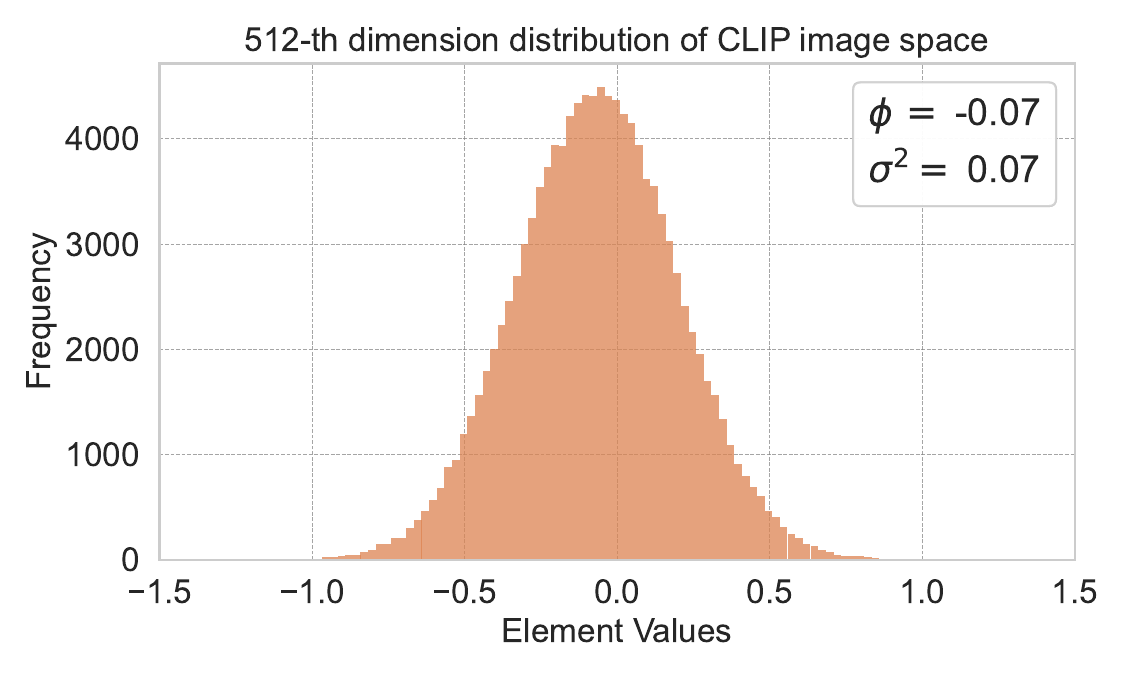}
    }
    \hfill
    \subfloat[128-th dim of text]{\label{gaussian:f}
        \includegraphics[width=0.24\linewidth,trim=0 0 0 10,clip]{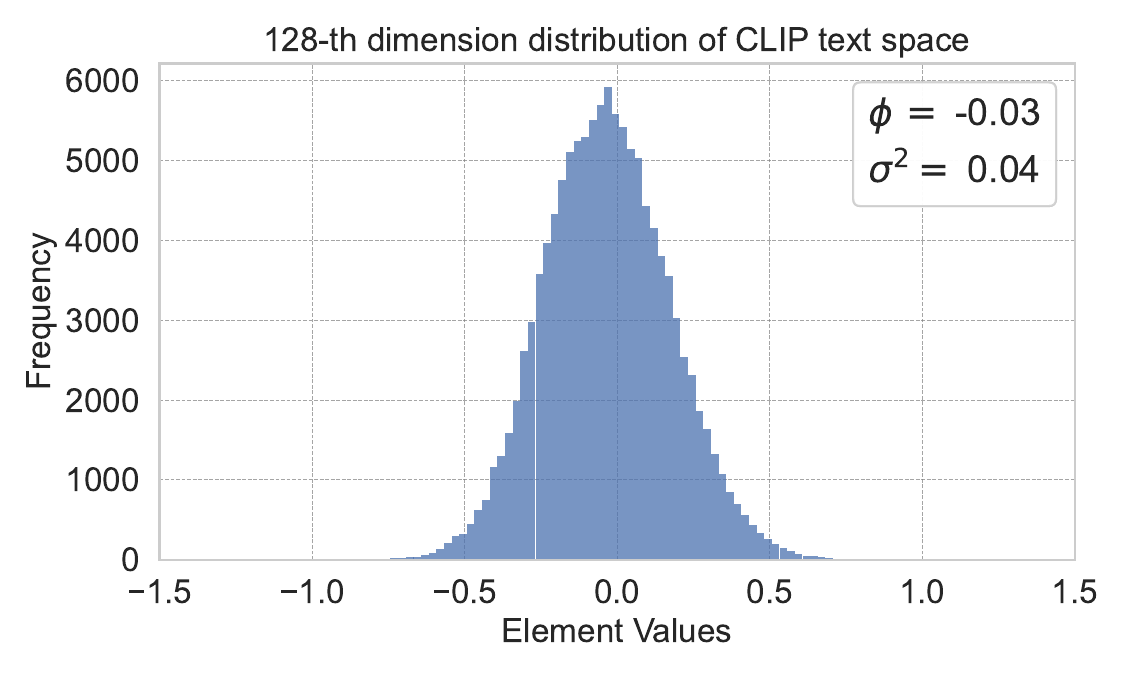}
    }
    \subfloat[256-th dim of text]
    {\label{gaussian:g}
        \includegraphics[width=0.24\linewidth,trim=0 0 0 10,clip]{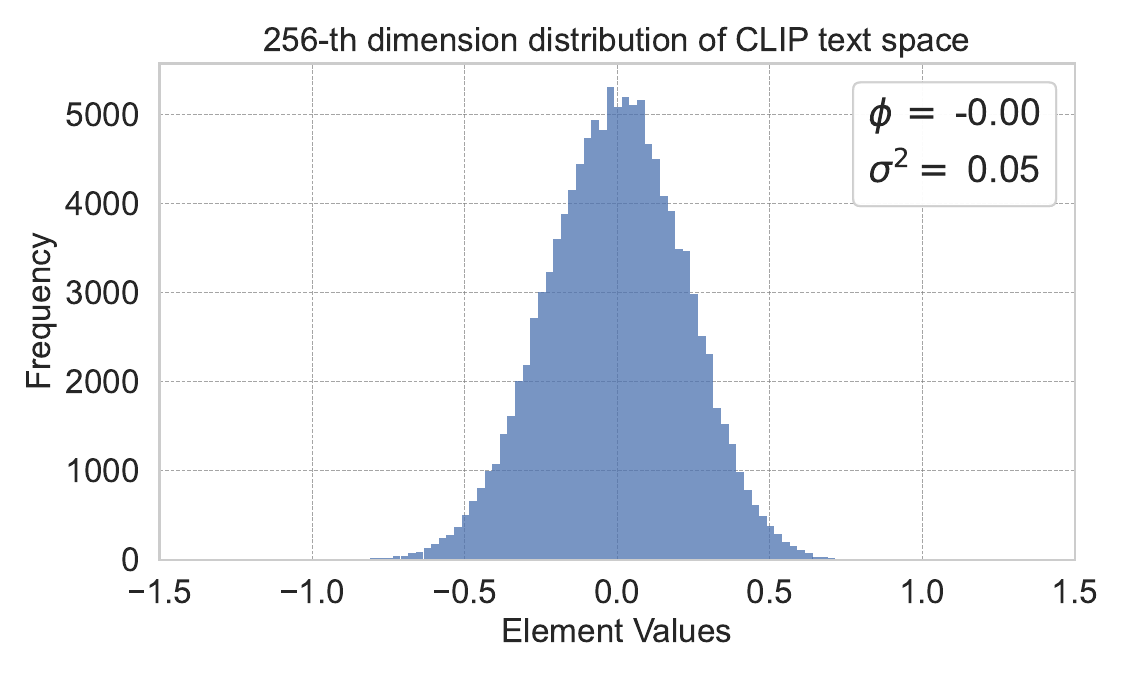}
    }
    \subfloat[512-th dim of text]{\label{gaussian:h}
        \includegraphics[width=0.24\linewidth,trim=0 0 0 10,clip]{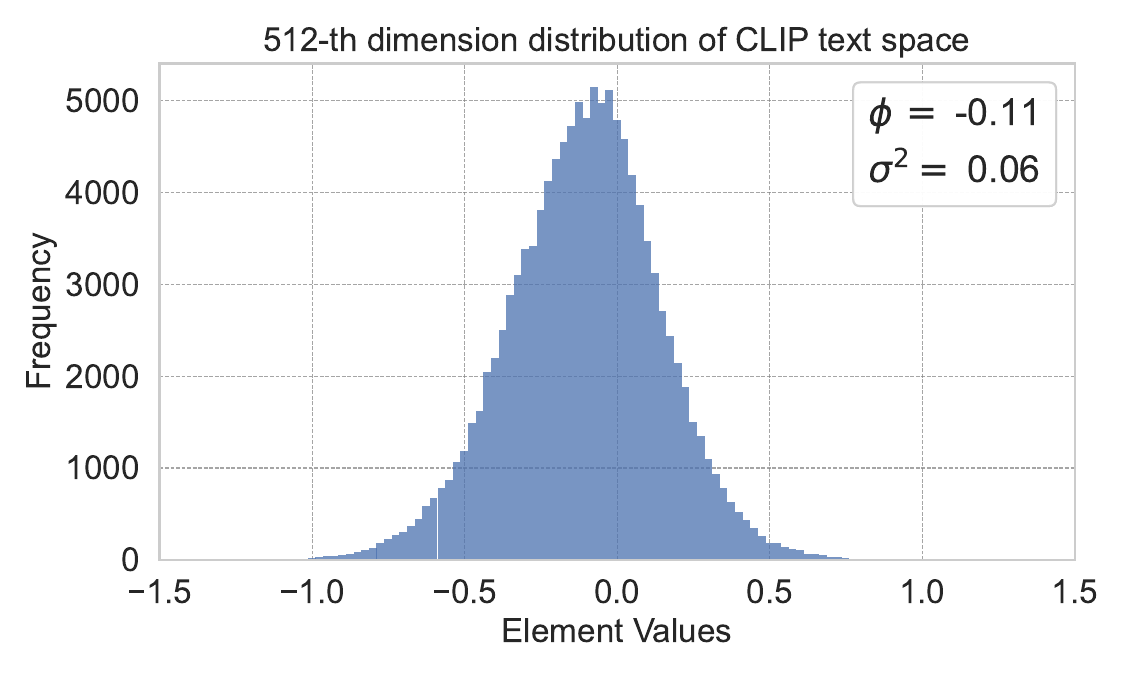}
    }
    \caption{The illustrations of several distributions on CC120K. (a) The parameter distribution. (b-d) The distribution of image features for the 128th, 256th, and 512th dimensions. (e-g) The distribution of text features for the 128th, 256th, and 512th dimensions.} 
    
    
\label{fig:gaussian}
\vspace{-4mm}
\end{figure*}

\subsection{Proof of Orthogonality Validity in Cone Space}
Although we have demonstrated in Appendix.~\ref{sec:ortho_proof} that in the original high-dimensional space, the cosine similarity between two randomly selected vectors—each dimension following a Gaussian distribution—typically converges near the orthogonal boundary, this property may not necessarily extend to the subspace of the shared embedding space maintained by the trained models. Specifically, for real image-text pairs, the subspace may deviate from the orthogonal characteristics observed in the original space. Thus, it is essential to investigate whether the orthogonality property holds within the cone space for the image-text subdomain post-training. 

To explore this, we first analyze the distribution of several dimensions of image and text features from the CC120K dataset, as illustrated in Figure.~\ref{fig:gaussian}. The results reveal that all vector dimensions, including trained parameters, exhibit a Gaussian distribution with near-zero means. 
This phenomenon arises from the convergence properties in two-layer networks that adhere to the central limit theorem~\citep{central_limit}, which, through the application of mean-field analysis, has been extended to neural network architectures~\citep{meanfield}. Consequently, the model parameters exhibit an overall Gaussian distribution, leading to Gaussian features. If the dimensions of the trained embedding space follow Gaussian distributions, the process of selecting random vectors within this space would be analogous to that of the original space, thereby preserving the orthogonality property. Here, we present the following theorem: The output features of large-scale models tend to Gaussian distribution. The detailed theorem and proof are provided below.

\begin{theorem}[Output features tends to Gaussian]\label{thm:2}
Given a Neural Network $F(x) = \{f_t(f_{t-1}(\dots f_2(f_1(x))))\} \in \mathbb{R}^{d_{out}}$ with $t$ layers. $f_l(x) = \phi_l(\mathbf{W}_l x + \mathbf{b}_l)$ denotes the $l^{th}$ layer, where $\phi(\cdot)$ indicates the activation function, and the final layer $f_t(x) = \mathbf{W}_t x + \mathbf{b}_t$ is a fully-connected layer without an activation function for common space projection. Let $x^k \in \mathbb{R}^{d_{in}^k}$ be the sample feature that will be transmitted into the $k^{th}$ layer, where $x^1$ denotes the original feature with an unknown distribution $x^1 \sim (\mu_x, \sigma_x^2)$. $\mathbf{W}_k \in \mathbb{R}^{d^k_{out} \times d^k_{in}}$ is a random weight matrix where each element $w^{k}_{ij} \sim \mathcal{N}(0, \sigma_w^2)$ for $i \in [d^k_{out}]$, $j \in [d^k_{in}]$, and $\mathbf{b}_k \in \mathbb{R}^{d^k_{out}}$ is a bias vector such that $b_i^k \sim \mathcal{N}(0, \sigma_w^2)$ for $i \in [d^k_{out}]$. In such a neural network, linear layers lead features $x$ gradually to a Gaussian distribution from any initial distribution, and as $|d_{in}|$ is sufficiently large, $F(x) \sim \mathcal{N}(0, \sigma^2)$.
\end{theorem}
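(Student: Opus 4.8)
The plan is to prove Theorem~\ref{thm:2} by induction on the layers, showing that each linear transformation $\mathbf{W}_k x + \mathbf{b}_k$ drives the pre-activation output toward a Gaussian distribution regardless of the input distribution, via a central-limit-theorem argument. The key observation is that each coordinate of the pre-activation, say $z_i^k = \sum_{j=1}^{d_{in}^k} w_{ij}^k x_j^k + b_i^k$, is a sum of a large number of terms. First I would fix a coordinate $i$ and condition on the incoming feature vector $x^k$; since the weights $w_{ij}^k \sim \mathcal{N}(0,\sigma_w^2)$ are i.i.d. and independent across $j$, the conditional distribution of $z_i^k$ given $x^k$ is exactly Gaussian with mean $b_i^k$-contribution and variance $\sigma_w^2 \|x^k\|^2$ (plus the bias variance). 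I would then argue that after integrating over the distribution of $x^k$, and because $\|x^k\|^2$ concentrates around its mean as $d_{in}^k$ grows large (a standard concentration-of-measure fact), the marginal distribution of $z_i^k$ converges to a fixed-variance Gaussian $\mathcal{N}(0,\sigma^2)$. The zero mean follows from $\mathbb{E}[w_{ij}^k] = 0$ and $\mathbb{E}[b_i^k] = 0$.

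The inductive step is the heart of the argument: I would assume that the output of layer $k-1$ is approximately Gaussian (or at least has well-controlled first two moments), apply the linear map to get an exactly-Gaussian conditional law, then pass through the activation $\phi$. Here I would need to track how the activation distorts the distribution — for a ReLU-type nonlinearity the output is a rectified Gaussian, which is no longer Gaussian, but the \emph{next} linear layer re-Gaussianizes it by summing over many coordinates. So the cleaner framing is to prove that the \emph{pre-activation} at each layer is asymptotically Gaussian, invoking a Lindeberg-type or Lyapunov CLT on the sum $\sum_j w_{ij}^k \phi(z_j^{k-1})$ where the summands are independent (through the independent weights) with finite variance. Since the final layer $f_t$ carries no activation, its output $F(x) = \mathbf{W}_t x^t + \mathbf{b}_t$ is then a Gaussian with zero mean, giving $F(x) \sim \mathcal{N}(0,\sigma^2)$ as claimed.

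The main obstacle I anticipate is making the CLT application rigorous across the \emph{composition} of layers. The difficulty is twofold: (i) the coordinates of $x^k = \phi(z^{k-1})$ are not independent of one another — they share the same weight matrix $\mathbf{W}_{k-1}$ — so the summands $w_{ij}^k x_j^k$ in the next layer are only conditionally independent given $x^k$, and one must argue that the dependence among the $x_j^k$ is weak enough (or that conditioning suffices) for a CLT to apply; and (ii) controlling the propagation of the variance $\sigma^2$ through the activation so that it stays bounded and nondegenerate, which requires assumptions on $\phi$ (e.g. Lipschitz, or the specific ReLU scaling). I would handle (i) by working conditionally on each layer's output and exploiting that the fresh, independent weights $w_{ij}^k$ at layer $k$ supply the randomness needed for the CLT even when the $x_j^k$ are deterministic given the past, and handle (ii) by a fixed-point / recursion argument on the variance map $\sigma_{k-1}^2 \mapsto \sigma_k^2$ induced by the activation. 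The heuristic ``as $|d_{in}|$ is sufficiently large'' in the statement signals that the intended proof is the asymptotic CLT regime rather than an exact finite-width claim, so I would state the convergence in distribution as $d_{in}^k \to \infty$ and keep the moment-matching estimates informal, matching the level of rigor used for Theorem~\ref{thm:1}.
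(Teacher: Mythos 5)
Your proposal is correct, but its central mechanism differs from the paper's. The paper works unconditionally: it computes $\mathbb{E}\bigl[\sum_j w_{ij}^k x_j^k\bigr]=0$ and $\mathrm{Var}\bigl(\sum_j w_{ij}^k x_j^k\bigr)=d_{in}^k\sigma_{w^k}^2(\sigma_{x^k}^2+\mu_{x^k}^2)$, then invokes a ``generalized Central Limit Theorem'' on the sum of products $w_{ij}^k x_j^k$ to conclude each pre-activation coordinate is asymptotically $\mathcal{N}\bigl(0,\,d_{in}^k\sigma_{w^k}^2(\sigma_{x^k}^2+\mu_{x^k}^2)+\sigma_b^2\bigr)$, and closes exactly as you do: activations truncate the Gaussian, but each fresh linear layer restores it, and the activation-free final layer gives the stated conclusion. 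You instead condition on $x^k$ and observe that, because the weights themselves are Gaussian, $z_i^k=\sum_j w_{ij}^k x_j^k+b_i^k$ is \emph{exactly} Gaussian with variance $\sigma_w^2\|x^k\|^2+\sigma_b^2$ — no CLT needed — and then de-condition via concentration of $\|x^k\|^2$. This is arguably the sounder route: the paper's CLT step silently treats the summands $w_{ij}^k x_j^k$ as if they satisfied a CLT jointly in $j$, yet the coordinates $x_j^k$ are dependent (they share the previous layers' weight matrices), so the products are uncorrelated but not independent; your conditioning device is precisely what repairs this, and you correctly flag it as the main obstacle. What the paper's framing buys in exchange is indifference to the weight distribution — a moment-based CLT argument would survive non-Gaussian initializations, whereas your exact conditional Gaussianity relies on $w_{ij}^k$ being Gaussian (which the theorem does assume). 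Both proofs share the same skeleton — zero mean by weight symmetry, a layer-wise variance recursion, and the observation that the final fully-connected layer carries no activation — so at the paper's (informal, asymptotic) level of rigor your plan establishes the same statement.
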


\begin{proof}[Proof of Theorem.~\ref{thm:2}] 
For the $k^{th}$ layer ($k \in [t]$), we first calculate the expectation and variance of the linear combination $\sum_{j=1}^{d_{in}^k} w_{ij}^k x_j^k$. For the expectation, since $w_{ij}^k$ and $x_j^k$ are independent and $\mathbf{w}^{k}_{ij} \sim \mathcal{N}(0, \frac{1}{d_{out}^{k}})$, we have: 
\begin{equation}
\begin{aligned}
&\mathbb{E}\left[\sum_{j=1}^{d_{in}^k} w_{ij}^k x_j^k\right] = \sum_{j=1}^{d_{in}^k} \mathbb{E}[w_{ij}^k] \mathbb{E}[x_j^k]\\
&= \sum_{j=1}^{d_{in}^k} (0 \times \mathbb{E}[x_j^k]) = 0.
\end{aligned}
\label{eq:expectation_wx}
\end{equation}
For variance, since $w_{ij}^k$ and $x_j^k$ are independent, we have:

\begin{equation}
\begin{aligned}
&\text{Var}\left(\sum_{j=1}^{d_{in}^k} w_{ij}^k x_j^k\right)
= \sum_{j=1}^{d_{in}^k} \text{Var}(w_{ij}^k x_j^k) \\
&=\sum_{j=1}^{d_{in}^k} \mathbb{E}\left[(w_{ij}^k)^2 (x_j^k)^2\right] \\
&= \sum_{j=1}^{d_{in}^k} \mathbb{E}\left[(w_{ij}^k)^2\right] \left( \text{Var}(x_j^k) + (\mathbb{E}[x_j^k])^2 \right) \\
&= \sum_{j=1}^{d_{in}^k} \sigma_{w^k}^2 \left(\sigma_{x^k}^2 + \mu_{x^k}^2\right)
= d_{in}^k \sigma_{w^k}^2 \left(\sigma_{x^k}^2 + \mu_{x^k}^2\right).
\end{aligned}
\label{eq:variance_wx_1}
\end{equation}

Since $w_{ij}^k$ are independently distributed Gaussian random variables, and $x_j^k$ has a known mean and variance, the sum of $w_{ij}^k x_j^k$ can be applied to a generalized Central Limit Theorem. We have 
\begin{equation}
\frac{\sum_{j=1}^{d_{in}^k} w_{ij}^k x_j^k - \mathbb{E}\left[\sum_{j=1}^{d_{in}^k} w_{ij}^k x_j^k\right]}{\sqrt{\text{Var}\left(\sum_{j=1}^{d_{in}^k} w_{ij}^k x_j^k\right)}} \xrightarrow{d} \mathcal{N}(0, 1),
\label{eq:variance_wx_2}
\end{equation}
which is equivalent to 
\begin{equation}
\frac{\sum_{j=1}^{d_{in}^k} w_{ij}^k x_j^k - 0}{\sqrt{d_{in}^k \sigma_{w^k}^2 (\sigma_{x^k}^2 + \mu_{x^k}^2)}} \xrightarrow{d} \mathcal{N}(0, 1).
\end{equation}
Therefore, 
\begin{equation}
\sum_{j=1}^{d_{in}^k} w_{ij}^k x_j^k \xrightarrow{d} \mathcal{N}(0, d_{in}^k \sigma_{w^k}^2 (\sigma_{x^k}^2 + \mu_{x^k}^2)).
\end{equation}
Due to $b^k \sim \mathcal{N}(0, \sigma^2_b)$, we finally get 
\begin{equation}
\sum_{j=1}^{d_{in}^k} w_{ij}^k x_j^k + b_i^k \xrightarrow{d} \mathcal{N}\left(0, d_{in}^k \sigma_{w^k}^2 (\sigma_{x^k}^2 + \mu_{x^k}^2) + \sigma_b^2\right).
\end{equation}
Although activation functions truncate the Gaussian distribution after each linear layer, the samples still gradually approach a Gaussian distribution from the initial unknown distribution as they pass through the layers. Furthermore, because there is a fully connected layer without an activation function before mapping to the final common space, the final feature distribution will approximate a Gaussian distribution, as follows:
\begin{equation}
F(x) \sim \mathcal{N}(0,d^t_{in}\sigma^2_{w^t}(\sigma^2_{x^t} + \mu^2_{x^t})+\sigma^2_b).
\end{equation}
\end{proof}

\section{Additional Experimental Results}
\label{sec:add_ex}

\subsection{Full Quantitative Results on Different Downstream Tasks}
\label{sec:append_qualititive}
\gjy{
\textbf{Results on Flickr30K.}
To further demonstrate the generalization ability of OSA, we evaluate on the Flickr30K dataset and compare with several anti-noise methods, including NCR~\citep{NCR}, DECL~\citep{DECL}, BiCro~\citep{BiCro}, and NPC~\citep{NPC}. The results are presented in Table.~\ref{table:f30k}. It is evident that OSA consistently outperforms all models on the R@1 metric. Notably, compared with the baseline CLIP, training with OSA at a 60\% noise ratio achieves 20.9\% R@1 improvement for i2t and a 22.3\% R@1 improvement in t2i, further indicating the effectiveness of OSA on noise mitigation. Additionally, OSA demonstrates similar noise robustness on the Flickr30K dataset as observed on MSCOCO, with only 1.4\% R@1 drop on i2t and 1.2\% R@1 drop on t2i ranging from 0\% noise to 60\% noise, while all of the other anti-noise approaches hardly resist the detriment from high-ratio noise. All of these results demonstrate the effectiveness and robustness of OSA on anti-noise tasks.

\textbf{Results on CC120K and CC3M.}
To further verify the reliability of OSA in real scenarios, we conduct evaluations on the subset and the original large-scale, real-world noisy dataset, CC120K and CC3M. The noise ratio is estimated as 3\%-20\%.
CC120K serves as a subset of CC3M, which indicates the lower implementation difficulties.
As NPC requires much more memory to construct the memory bank for each sample, we are not able to reproduce its result on CC3M.
The results in Table.~\ref{table:cc120k_3m} indicates that OSA outperforms the current state-of-the-art method NPC, even in real-world domains.
And is much easier to use in a real-world large data scale scenario.
This demonstrates the feasibility and generality of OSA even in practical training scenarios.


}
\begin{table*}[t]
\caption{Comparison on noisy Flickr30K.}
  \renewcommand{\arraystretch}{0.9}
  \setlength{\tabcolsep}{0.9mm}
  \centering
  \scalebox{0.85}{
\begin{tabular}{c|c|ccc|ccc|c|ccc|ccc}
\toprule
\multirow{2}{*}{Method} & \multirow{2}{*}{\small{Noise ratio}} & \multicolumn{3}{c|}{i2t} & \multicolumn{3}{c|}{t2i} & \multirow{2}{*}{\small{Noise ratio}} & \multicolumn{3}{c|}{i2t} & \multicolumn{3}{c}{t2i}\\
                       &   & R@1     & R@5    & R@10    & R@1     & R@5    & R@10 &   & R@1     & R@5    & R@10    & R@1     & R@5    & R@10   \\ 
                       \cmidrule(r){1-15}
    NCR & \multirow{6}{*}{0\%}  & 77.3          & 94.0          & 97.5          & 59.6          & 84.4          & 89.9   & \multirow{6}{*}{20\%}  & 73.5          & 93.2          & 96.6          & 56.9          & 82.4          & 88.5       \\
    DECL &  & 79.8          & 94.9          & 97.4          & 59.5          & 83.9          & 89.5   &  & 77.5          & 93.8          & 97.0          & 56.1          & 81.8          & 88.5       \\
    BiCro & & 81.7          & 95.3          & 98.4          & 61.6          & 85.6          & 90.8   &  & 78.1          & 94.4          & 97.5          & 60.4          & 84.4          & 89.9       \\
    NPC  & & 87.9 & \textbf{98.1} & \textbf{99.4} & 75.0 & \textbf{93.7} & \textbf{97.2} & & 87.3 & 97.5 & 98.8 & 72.9 & 92.1 & 95.8 \\
    \cmidrule(r){1-1} \cmidrule(r){3-8} \cmidrule(r){10-15}
    CLIP & & 86.2          & 97.6          & 99.2          & 72.9          & 92.3          & 96.0   & & 82.3          & 95.5          & 98.3          & 66.0          & 88.5          & 93.5       \\
    \textbf{+OSA} & & \textbf{88.6} & 97.7 & 99.3 & \textbf{75.6} & 93.6 & 96.8 & & \textbf{88.9} & \textbf{97.7} & \textbf{99.1} & \textbf{75.6} & \textbf{93.3} & \textbf{96.9} \\
                      \cmidrule(r){1-15}
    NCR & \multirow{6}{*}{40\%}   & 68.1          & 89.6          & 94.8          & 51.4          & 78.4          & 84.8   & \multirow{6}{*}{60\%}  & 13.9          & 37.7          & 50.5          & 11.0          & 30.1          & 41.4     \\
    DECL &  & 72.7          & 92.3          & 95.4          & 53.4          & 79.4          & 86.4   & & 65.2          & 88.4          & 94.0          & 46.8          & 74.0          & 82.2      \\
    BiCro & & 74.6          & 92.7          & 96.2          & 55.5          & 81.1          & 87.4   & & 67.6          & 90.8          & 94.4          & 51.2          & 77.6          & 84.7      \\
    NPC & & 85.6 & 97.5 & 98.4 & 71.3 & 91.3 & 95.3 & & 83.0 & 95.9 & 98.6 & 68.1 & 89.6 & 94.2 \\
    \cmidrule(r){1-1} \cmidrule(r){3-8} \cmidrule(r){10-15}
    CLIP &  & 76.2          & 93.3          & 96.5          & 59.4          & 85.0          & 90.9   & & 66.3          & 87.3          & 93.0            & 52.1          & 78.8          & 87.4      \\
    \textbf{+OSA} & & \textbf{87.3} & \textbf{97.6} & \textbf{99.3} & \textbf{74.2} & \textbf{93.1} & \textbf{96.7} & & \textbf{87.2} & \textbf{98.1} & \textbf{99.6} & \textbf{74.4} & \textbf{92.9} & \textbf{96.4}  \\                    
                      \cmidrule(r){1-15}

\end{tabular}}
\label{table:f30k}
\end{table*}


\begin{table*}[h]
    \centering
    \caption{Comparison on real-world noisy dataset CC120K and CC3M.}
    \renewcommand{\arraystretch}{1.15}
    \setlength{\tabcolsep}{1mm}
    \scalebox{0.8}{
    \begin{tabular}{l|ccc|ccc|ccc|ccc}
    \hline
    \multirow{3}{*}{Method} & \multicolumn{6}{c|}{CC120K} & \multicolumn{6}{c}{CC3M} \\
    \cline{2-13}
     & \multicolumn{3}{c|}{\textbf{i2t}} & \multicolumn{3}{c}{\textbf{t2i}}  & \multicolumn{3}{|c|}{\textbf{i2t}} & \multicolumn{3}{c}{\textbf{t2i}}\\
     & \textbf{R@1} & \textbf{R@5} & \textbf{R@10} & \textbf{R@1} & \textbf{R@5} & \textbf{R@10} 
     & \textbf{R@1} & \textbf{R@5} & \textbf{R@10} & \textbf{R@1} & \textbf{R@5} & \textbf{R@10}  \\
    \hline
    NPC & 71.1 & 92.0 & \textbf{96.2} & 73.0 & 90.5 & \textbf{94.8} 
    & \multicolumn{6}{c}{\textit{Out Of Memory}} \\
    CLIP & 68.8 & 87.0 & 92.9 & 67.8 & 86.4 & 90.9 
    & 42.41 & 66.70 & 75.56 & 42.45 & 67.83 & 76.46 \\
    \cline{1-13}
    \textbf{         +OSA} & \textbf{73.1} & \textbf{92.2} & 95.7 & \textbf{73.9} & \textbf{91.2} & 94.7 & \textbf{43.34} & \textbf{67.48} & \textbf{75.79} & \textbf{43.46} & \textbf{68.33} & \textbf{76.58} \\
    \hline
    \end{tabular}
    }
    \label{table:cc120k_3m}
\end{table*}

\subsection{Full Results on Different Scoring Function}
\label{sec:diff_scoring}
In Section~\ref{sec: scoring_main}, we introduce a scoring function designed to handle overlaps effectively and re-weight samples based on their cosine similarity. This section explores several scoring functions, including the Linear, Cosine, and High-degree functions. The functions are presented individually as follows:

\textbf{Linear Function.} 
\begin{equation}
w(x,y,\beta)=\left\{\begin{aligned}
& 0 &, & \tilde{s}_{x,y} \le 0\\
& \tilde{s}_{x,y} &, & \textit{otherwise} \hfill\\
\end{aligned}\right.
\label{eq:scoring_1}
\end{equation}

\textbf{Cosine Function.} 
\begin{equation}
w(x,y,\beta)=\left\{\begin{aligned}
& 0 &, & \tilde{s}_{x,y} \le 0\\
& \frac{\cos(\pi(\tilde{s}_{x,y} - 1)) + 1}{2} &, & \textit{otherwise} \hfill\\
\end{aligned}\right.
\label{eq:scoring_2}
\end{equation}

\textbf{High-degree Function.} 
\begin{equation}
w(x,y,\beta)=\left\{\begin{aligned}
& 0 &, & \tilde{s}_{x,y} \le 0\\
& -(\tilde{s}_{x,y})^2(\tilde{s}_{x,y}-1) &, & \textit{otherwise} \hfill\\
\end{aligned}\right.
\label{eq:scoring_3}
\end{equation}

The results are presented in the Table.~\ref{table:scoring_function} indicates that the High-Degree Function outperforms the others across all evaluation metrics. This superior performance can be attributed to the rapid gradient changes near the decision boundary, which align better with the tendency of orthogonal boundaries. As a result, we adopt the High-Degree Function as our scoring function.

\begin{table*}[ht]
  \caption{Comparison of different scoring functions.}
  \label{table:scoring_function}
  \renewcommand{\arraystretch}{0.8}
  \setlength{\tabcolsep}{1.2mm}
  \centering
  \scalebox{0.8}{
  \begin{tabular}{c|l|ccc|ccc|ccc|ccc}
    \toprule
    \multirow{3}{*}{Noise ratio} & \multirow{3}{*}{Method} & \multicolumn{6}{c|}{MS-COCO 1K} & \multicolumn{6}{c}{MS-COCO 5K} \\
     & & \multicolumn{3}{c}{i2t} & \multicolumn{3}{c|}{t2i}  & \multicolumn{3}{c}{i2t} & \multicolumn{3}{c}{t2i}\\
     &  &  \multicolumn{1}{c}{R@1} & \multicolumn{1}{c}{R@5} & \multicolumn{1}{c}{R@10} &
                                \multicolumn{1}{c}{R@1} & \multicolumn{1}{c}{R@5} & \multicolumn{1}{c|}{R@10} &  \multicolumn{1}{c}{R@1} & \multicolumn{1}{c}{R@5} & \multicolumn{1}{c}{R@10} &
                                \multicolumn{1}{c}{R@1} & \multicolumn{1}{c}{R@5} & \multicolumn{1}{c}{R@10}   
                                \\
    \cmidrule(r){1-14}
     \multirow{3}{*}{50\%}  & Linear &  80.4   &   96.2  &  98.6 & 67.8  & 91.6 & 96.4 &  64.0  & 85.5 & 91.9 & 47.9  & 74.6 & 83.8 \\
    & Cosine &  80.8   &  96.3  &  98.5 & 67.7  & 91.6 & 96.3 & 64.4  & 86.2 & 92.3 & 48.0  & 74.9 & 83.9   \\
    & High-Degree &  \textbf{81.4}   &  \textbf{96.5} &  \textbf{98.6} & \textbf{68.4}  & \textbf{92.0} & \textbf{96.6} &  \textbf{64.7}  & \textbf{86.8} & \textbf{92.4} & \textbf{48.6}  & \textbf{75.9} & \textbf{84.6}   \\
    
    \bottomrule
  \end{tabular}}
\end{table*}

\subsection{Full Results on OSA Transferability}
\label{sec:transfer}

\textbf{Results on transferring OSA to other noise mitigation methods.}
We apply OSA to give guidance to other noise mitigation methods to prove that our framework follows a plug-and-play style and can be easily fit into existing methods. From Table. \ref{table:apply2npc}, we observe that even integrated with the previous SOTA method, OSA can still give effective guidance to it, thereby the generalization ability of our method can be further proved.

\begin{table}[H]
\caption{The results of other methods employing OSA on MSCOCO 1K.}
\renewcommand{\arraystretch}{1.25}
\setlength{\tabcolsep}{1mm}
\centering
\scalebox{0.7}{
\begin{tabular}{c|l|ccc|ccc}
\hline
\multirow{2}{*}{Noise Ratio} & \multirow{2}{*}{Method}  & \multicolumn{3}{c|}{\textbf{i2t}} & \multicolumn{3}{c}{\textbf{t2i}} \\
  &  & \textbf{R@1} & \textbf{R@5} & \textbf{R@10} & \textbf{R@1} & \textbf{R@5} & \textbf{R@10} \\
\hline
\multirow{2}{*}{0\%}  & NPC           & 82.2 & \textbf{96.5} & \textbf{98.7} & 68.3 & \textbf{92.0} & \textbf{98.7} \\
     & \textbf{\qquad+OSA}       & \textbf{82.4} & 96.4 & 98.6 & \textbf{68.5} & 91.8 & \textbf{98.7} \\
\hline
\multirow{2}{*}{20\%} & NPC           & 79.9 & 95.9 & 98.4 & 66.3 & 90.5 & 98.4 \\
     & \textbf{\qquad+OSA}       & \textbf{81.2} & \textbf{96.0} & \textbf{98.6} & \textbf{66.9} & \textbf{91.2} & \textbf{98.6} \\
\hline
\multirow{2}{*}{50\%} & NPC           & 78.2 & 94.4 & 97.7 & 63.1 & 89.0 & 97.7 \\
     & \textbf{\qquad+OSA}       & \textbf{79.3} & \textbf{95.6} & \textbf{98.2} & \textbf{66.8} & \textbf{90.8} & \textbf{98.2} \\
\hline
\end{tabular}
}
\label{table:apply2npc}
\end{table}


\begin{table*}[t]
  \caption{The results of different estimator types on noisy MS-COCO.}
  \label{table:domain}
  \renewcommand{\arraystretch}{0.9}
  \setlength{\tabcolsep}{1mm}
  \centering
  \scalebox{0.825}{
  \begin{tabular}{c|l|ccc|ccc|ccc|ccc}
    \toprule
    \multirow{3}{*}{Noise ratio} & \multirow{3}{*}{Estimator} & \multicolumn{6}{c|}{MS-COCO 1K} & \multicolumn{6}{c}{MS-COCO 5K} \\
     & & \multicolumn{3}{c}{i2t} & \multicolumn{3}{c|}{t2i}  & \multicolumn{3}{c}{i2t} & \multicolumn{3}{c}{t2i}\\
     &  &  \multicolumn{1}{c}{R@1} & \multicolumn{1}{c}{R@5} & \multicolumn{1}{c}{R@10} &
                                \multicolumn{1}{c}{R@1} & \multicolumn{1}{c}{R@5} & \multicolumn{1}{c|}{R@10} &  \multicolumn{1}{c}{R@1} & \multicolumn{1}{c}{R@5} & \multicolumn{1}{c}{R@10} &
                                \multicolumn{1}{c}{R@1} & \multicolumn{1}{c}{R@5} & \multicolumn{1}{c}{R@10}   
                                \\
    \cmidrule(r){1-14}
     \multirow{4}{*}{0\%}   & None    &  80.1   &   95.7  &  98.2 & 67.1  & 91.4 & 96.6 &  62.9   &   84.9  &  91.6 & 46.5  & 73.8 & 82.9 \\
    & CLIP (w/o DA)           & \textbf{82.6} & \textbf{96.7} & 98.7 & 68.5 & \textbf{92.1} & \textbf{96.7} & \textbf{66.2} & \textbf{87.0} & \textbf{93.3} & 48.6 & 75.7 & \textbf{84.8} \\
    & ALIGN (w/o DA)&  81.9    &   96.7      &  98.7    & 68.9   & 92.2     & 96.9  &   64.8    &   86.6      &  92.7  & 49.0   & 75.9 & 84.7   \\
    & \textbf{CLIP (w DA)}            & 82.2 & 96.5 & 98.7 & \textbf{68.8} & 92.1 & 96.7 & 65.6 & 86.8 & 92.9 & \textbf{49.1} & \textbf{76.2} & \textbf{84.8}  \\
    
    \cmidrule(r){1-14}
     \multirow{4}{*}{20\%}   &  None     &   76.0   &   94.3  &  97.5 & 63.4  & 89.0 & 94.8 &   55.3   &   79.1  &  86.9 & 41.0  & 68.8 & 79.3 \\
    & CLIP (w/o DA)            & \textbf{81.8} & 96.1 & \textbf{98.7} & 68.2 & 91.9 & 96.5 & 64.8 & \textbf{86.6} & 92.3 & 48.3 & 75.4 & 84.1  \\
    & ALIGN (w/o DA) &  81.2    &   96.0      &  98.6    & 67.7   & 91.5     & 96.4  &   64.8    &   86.2      & 92.3  & 47.8   & 74.9 & 83.9   \\
    & \textbf{CLIP (w DA)}     &  81.6   &  \textbf{96.2}  &  98.5 & \textbf{68.9}  & \textbf{92.0} & \textbf{96.6} &  \textbf{65.8}  & 86.4 & \textbf{92.5} & \textbf{48.7}  & \textbf{76.1} & \textbf{84.5} \\
    
    \cmidrule(r){1-14}
     \multirow{4}{*}{50\%}   & None     &  73.9   &   93.0  &  97.2 & 60.1  & 87.3 & 94.0  &  54.1   &   78.5  &  86.6 & 39.7  & 67.2 & 77.5\\
    & CLIP (w/o DA)&  79.6    &   95.6      &  98.4    & 65.9   & 90.8     & 95.9  &   62.4    &   84.8      &  90.8  & 45.7   & 73.1 & 82.5   \\
     & ALIGN (w/o DA) &  80.4    &   95.6      &  98.3    & 66.0   & 90.5     & 95.8  &   62.0    &   84.9      &  91.8  & 45.7   & 73.2 & 82.5   \\
    & \textbf{CLIP (w DA)}     &  \textbf{81.4}   &  \textbf{96.5} &  \textbf{98.6} & \textbf{68.4}  & \textbf{92.0} & \textbf{96.6} &  \textbf{64.7}  & \textbf{86.8} & \textbf{92.4} & \textbf{48.6}  & \textbf{75.9} & \textbf{84.6} \\
    \bottomrule
  \end{tabular}}
\end{table*}

\textbf{Results for overall noise rank reliability.}
\label{sec:append_noise_rank}
Some anti-noise methods, like NPC, also employ loss re-weighting for optimization. To assess whether our method assigns relatively smaller weights to noise than these methods, we first analyze the weights generated by NPC and OSA. Due to differences in weight scales across methods, a direct comparison is unfair. We therefore adopt a ranking-based approach, sorting weights in descending order and calculating the Mean Noise Rank to unify the scale. This metric evaluates whether smaller weights are consistently assigned to noisy samples relative to clean ones. Our experiments use 2,000 randomly selected samples from the MSCOCO dataset under two noise conditions: 20\% noise (370 noisy samples) and 50\% noise (953 noisy samples). The theoretical optimal Mean Noise Ranks, where all noisy weights are ranked last, are 1815.5 and 1524.0, respectively. Results presented in Table.~\ref{table:weight_rank} show that OSA achieves a higher Mean Noise Rank compared to NPC, demonstrating greater accuracy in re-weighting. Moreover, OSA’s rankings are nearly optimal (20\% noise: 1809.1 for OSA vs. 1815.5 optimal; 50\% noise: 1520.7 for OSA vs. 1524.0 optimal). This near-perfect alignment indicates that OSA effectively places almost all noisy samples behind the clean ones.
\begin{table*}[h]
\centering
\caption{Mean Noise Rank Comparison between OSA and NPC.}
\label{table:weight_rank}
  \renewcommand{\arraystretch}{1.2}
  \setlength{\tabcolsep}{0.8mm}
  \scalebox{0.9}{
\begin{tabular}{c|c|c|c|c|c}
\hline
\textbf{Noise Ratio} & \textbf{Method} & \textbf{Mean Noise Rank}$\uparrow$ & \textbf{Optimal Rank} & \textbf{Noise Number} & \textbf{Sample Number} \\
\hline
\multirow{2}{*}{20\%} & NPC & 1641.3 & 1815.5 & 370 & 2,000 \\
 & OSA & \textbf{1809.1} & 1815.5 & 370 & 2,000 \\
\cmidrule(r){1-6}
\multirow{2}{*}{50\%} & NPC & 1456.2 & 1524.0 & 953 & 2,000 \\
 & OSA & \textbf{1520.7} & 1524.0 & 953 & 2,000 \\
\hline
\end{tabular}}
\end{table*}

\begin{table}[H]
    \centering
        \renewcommand{\arraystretch}{1.2}
        \centering
        \caption{Overhead Comparison.}
        \setlength{\tabcolsep}{3mm}
        \scalebox{0.95}{
        \begin{tabular}{c|c|c}
        \toprule
        Model & Time & Extra Cost\\
        \cmidrule(r){1-3}
        CLIP & 97 min &  0 min \\
        NPC & 323 min & 226 min \\
        OSA & 118 min & 21 min \\
        \bottomrule
        \end{tabular}}
        \label{table:time}
\end{table}

\textbf{Comparisons of extra cost.}
As mentioned in Sec. \ref{sec:intro}, existing noise mitigation methods tend to have extra steps before training the target model. Table. \ref{table:time} shows the comparisons between OSA and NPC on the extra time cost. As NPC needs extra time to coarsely filter some relative clean samples and use them to warm up the model, OSA just needs a forward step to calculate the noise probability, which leads to less extra time cost.

\section{SDM Visualization}
\label{sec:SDM_visualization}
We visualize some representative samples from our synthetic domain originating from COCO by using SDM. The results are shown in Figure.~\ref{fig:SDM}. We generate two styles of image based on the MSCOCO caption, and then use pre-trained multimodal models to calculate cosine similarity with the SDM-generated image and original caption.

\begin{figure*}[t]
    \centering
    \includegraphics[width=1\linewidth,trim=80 80 80 80,clip]{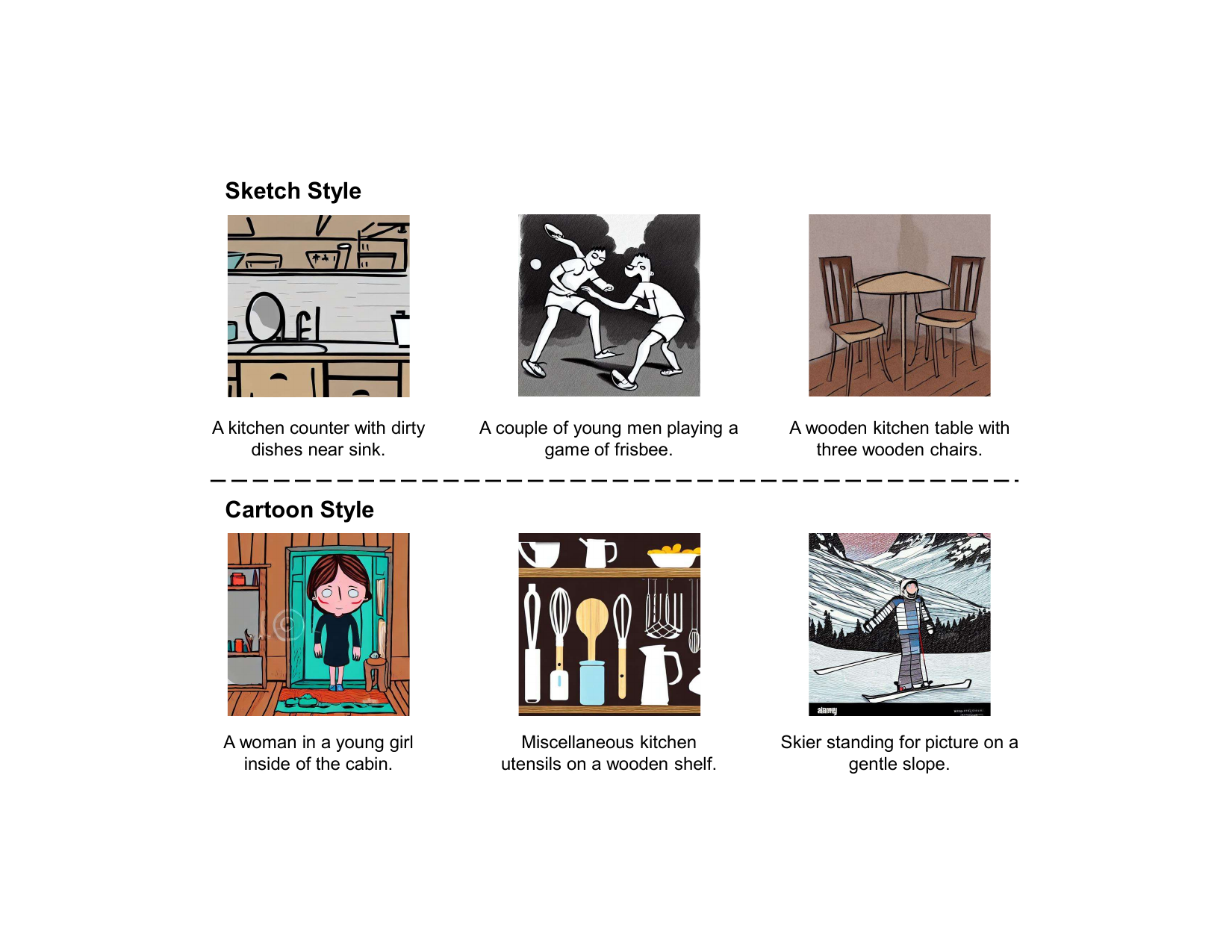}
    \caption{Examples of generated SDM dataset. The first row is in sketch style, while the second row is in cartoon style.}
    \label{fig:SDM}
\end{figure*}

\end{document}